\documentclass[sigconf]{acmart}

\AtBeginDocument{%
  \providecommand\BibTeX{{%
    \normalfont B\kern-0.5em{\scshape i\kern-0.25em b}\kern-0.8em\TeX}}}

\usepackage{algorithm}
\usepackage{algorithmic}
\usepackage{amsthm}

\newcommand{\bx}{\mathbf{x}}
\newcommand{\bv}{\mathbf{v}}
\newcommand{\bt}{\mathbf{t}}

\newcommand{\V}{\mathbf{V}}
\newcommand{\pr}{\text{Pr}}
\newcommand{\bW}{\mathbf{W}}
\newcommand{\ind}{\perp\!\!\!\perp}
\newtheorem{proposition}{Proposition}
\usepackage{color}
\usepackage{xcolor}
\usepackage{multirow}
\usepackage{booktabs}
\usepackage{amsmath}
\usepackage{bbm}
\usepackage{enumitem}
\usepackage{subcaption}
\usepackage{pifont}
\usepackage{xspace}

\newcommand{\method}{\texttt{SurvTRACE}\xspace}

\copyrightyear{2022} 
\acmYear{2022} 
\setcopyright{acmcopyright}\acmConference[BCB '22]{13th ACM International Conference on Bioinformatics, Computational Biology and Health Informatics}{August 7--10, 2022}{Northbrook, IL, USA}
\acmBooktitle{13th ACM International Conference on Bioinformatics, Computational Biology and Health Informatics (BCB '22), August 7--10, 2022, Northbrook, IL, USA}
\acmPrice{15.00}
\acmDOI{10.1145/3535508.3545521}
\acmISBN{978-1-4503-9386-7/22/08}



\begin{document}

\title{\method: Transformers for Survival Analysis with Competing Events}


\author{Zifeng Wang}
\email{zifengw2@illinois.edu}
\affiliation{%
  \institution{UIUC}
  \city{Urbana}
  \state{IL}
  \country{USA}
}

\author{Jimeng Sun}
\email{jimeng@illinois.edu}
\affiliation{%
  \institution{UIUC}
  \city{Urbana}
  \state{IL}
  \country{USA}
}







\renewcommand{\shortauthors}{Wang and Sun}

\begin{abstract}
In medicine, survival analysis studies the time duration to events of interest such as mortality. One major challenge is how to deal with multiple competing events (e.g., multiple disease diagnoses). In this work, we propose a transformer-based model that does not make the assumption for the underlying survival distribution and is capable of handling competing events, namely \method. We account for the implicit \emph{confounders} in the observational setting in multi-events scenarios, which causes selection bias as the predicted survival probability is influenced by irrelevant factors. To sufficiently utilize the survival data to train transformers from scratch, multiple auxiliary tasks are designed for multi-task learning. The model hence learns a strong shared representation from all these tasks and in turn serves for better survival analysis. We further demonstrate how to inspect the covariate relevance and importance through interpretable attention mechanisms of \method, which suffices to great potential in enhancing clinical trial design and new treatment development. Experiments on METABRIC, SUPPORT, and SEER data with 470k patients validate the all-around superiority of our method. Software is available at \url{https://github.com/RyanWangZf/SurvTRACE}.
\end{abstract}

\begin{CCSXML}
<ccs2012>
<concept>
<concept_id>10010405.10010444.10010449</concept_id>
<concept_desc>Applied computing~Health informatics</concept_desc>
<concept_significance>500</concept_significance>
</concept>
<concept>
<concept_id>10002951.10003227.10003351</concept_id>
<concept_desc>Information systems~Data mining</concept_desc>
<concept_significance>500</concept_significance>
</concept>
</ccs2012>
\end{CCSXML}

\ccsdesc[500]{Applied computing~Health informatics}
\ccsdesc[500]{Information systems~Data mining}

\keywords{survival analysis, competing events, transformers}


\maketitle

\section{Introduction}
Time-to-event analysis, or survival analysis, studies the \emph{probability} of event occurrence and the \emph{timing} of the event with broad applications, including medicine  \cite{friedman2015fundamentals}, reliability engineering \cite{pena2004models}, business analysis \cite{morrison2004introduction}. It also allows us to handle \emph{censored data}, i.e., we do not observe the occurrence of events due to the early stop of follow-ups, which has to be removed if we opt to use standard regression models.

\begin{figure}[t]
    \centering
    \includegraphics[width=0.45\textwidth]{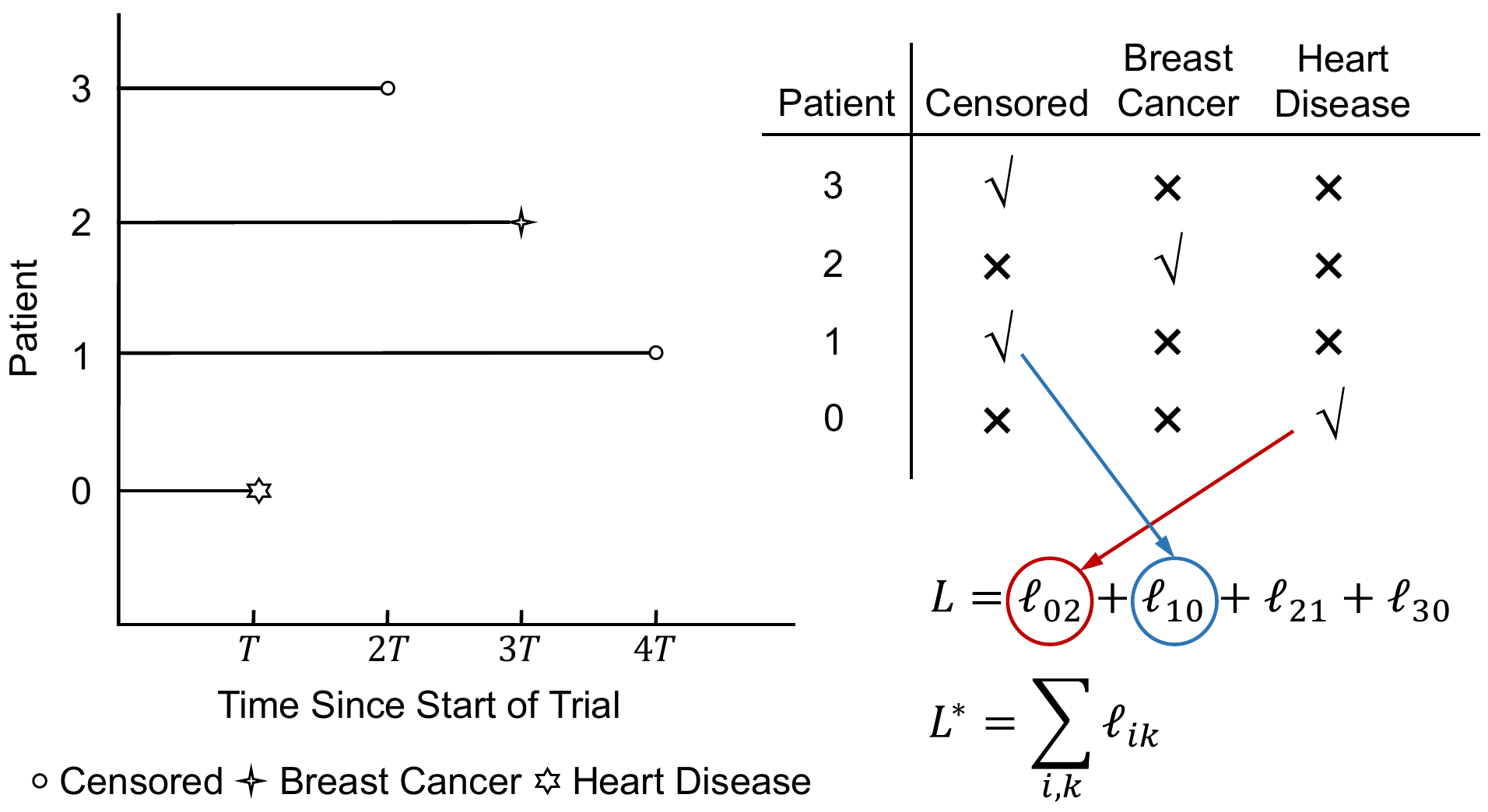}
    \caption{\textbf{Left}: Follow-up experience of 4 patients. \textbf{Right}: Patient-event table that shows event/censoring occurrence. \ding{51} indicates observed and \ding{55} is unobserved. Only the observed events can be used for training with the naive loss $L$ and the true loss $L^*$ is inaccessible.
    }
    \label{fig:bias_sample}
\end{figure}

Apart from data censoring, there are often multiple \emph{competing risks} which can lead to the incidence of events. For instance, it is prevalent that the elder who carries malignant cancer also suffers from other chronic diseases like diabetes. Dealing with competing risks is more difficult than with a single event, thus much less explored in the literature. These works try to alleviate the performance loss due to the strong assumption that each event is independent. However, the \emph{selection bias} in competing events survival data was seldom mentioned. Consider the targets $S(0)$ and $S(1)$ indicate the predicted survival probability for event 0 and event 1, respectively; $E \in \{0,1\}$ means which event is actually observed; and $X$ is the measured covariates of patients. Our estimated survival probability is unbiased only when $\{S(0),S(1)\} \ind E$, i.e. exchangeable \cite{little2019statistical}. In the observational setting of survival data, as shown by Fig. \ref{fig:bias_sample}, once an event was observed, others become \emph{counterfactuals}. In this case, the model performance is influenced by the event incidence and is biased towards those events which happen more frequently, or \emph{common events}.  Likewise, the model will have much worse performance on \emph{rare events}. In this sense, it is imperative to debias survival models such that it satisfies ignorability, i.e. $\{S(0),S(1)\} \ind E \mid X$, by counterfactual learning \cite{schnabel2016recommendations}.

On the other hand, deep learning (DL) was proved useful in enhancing survival analysis recently \cite{luck2017deep,lee2018deephit,nagpal2021deep}. However, they still suffer from insufficient training over rare events \cite{castaneda2010appraisal}. One reason is these models only use the basic multi-layer perceptron (MLP) with handcrafted features. The other challenge is that the current public survival data is either too small or too imbalanced to train strong DL models. That is, how to leverage strong DL models to boost survival analysis with limited survival data remains a challenge.


In this work, we propose \method, which stands for \textbf{Surv}ival analysis using \textbf{TRA}ansformers with \textbf{C}ompeting \textbf{E}vents. \method is enabled by the following technical contributions:
\begin{enumerate}
    \item \textbf{Debiasing competing events analysis using counterfactual learning.} We develop a learning method based on inverse propensity score (IPS) \cite{little2019statistical} that remedies selection biases in survival data with competing risks. This method guarantees unbiased evaluation and learning of survival analysis models hence outperforms methods that ignore selection bias,  especially on the analysis for rare events.
    \item \textbf{Automatic feature engineering with attentive encoders.} We study how to automatically learn high-order interactions between covariates through attention \cite{devlin2019bert}, to avoid manual feature engineering. Meanwhile, we inspect how the learned attention scores demonstrate relevance between covariates as well as show interpretability for the prediction results.
    \item \textbf{Multi-task learning with a shared backbone.} We design multiple auxiliary tasks to make the best of limited survival data to train \method from scratch. A shared representation is learned from all tasks then serves as a strong backbone for downstream tasks. This fashion strengthens the prediction accuracy for all events by sharing common knowledge.
\end{enumerate}

We evaluate \method on two open survival datasets and a large-scale dataset with 470k patients, where it outperforms the state-of-the-art baselines significantly.

\begin{figure*}[t]
\centering
\includegraphics[width=0.9\textwidth]{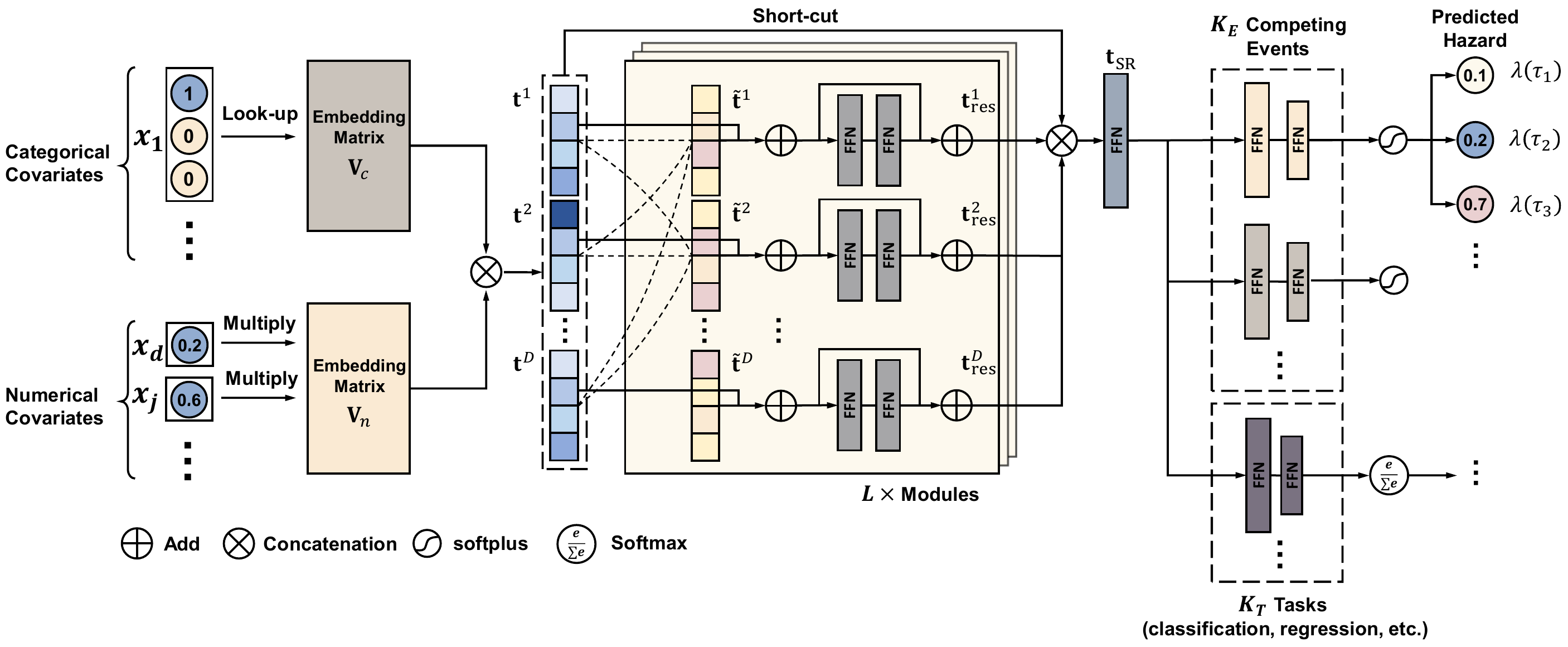}
\caption{\method architecture. The raw numerical and discrete covariates are encoded through two embedding matrices $\mathbf{V}_c$ and $\mathbf{V}_n$ separately. Two types of embeddings ($\mathbf{t}^1,\dots,\mathbf{t}^D$) are concatenated before going into the attentive encoder layers, where covariate embeddings interact to form high-order combinatorial embeddings $\widetilde{\mathbf{t}}^1,\dots,\widetilde{\mathbf{t}}^D$. The yielded representations $\mathbf{t}_{\text{SR}}$ are shared across all task-specific sub-networks for downstream tasks. For survival analysis, hazard ratio $\lambda(\tau)$ of each discrete duration index $\tau$ is predicted. \label{fig:architecture}}
\end{figure*}

\section{Related Work}
The thriving need for survival analysis encourages a plethora of statistical methods. For instance, the Cox proportional hazards model (CPH) \cite{cox1972regression} which is multivariate linear regression. To enhance CPH, multi-task learning \cite{li2016multi}, transfer learning \cite{li2016transfer}, active learning \cite{vinzamuri2014active} were used. On the other hand, new models advanced by machine learning (ML), e.g., survival support vector machine \cite{van2007support,polsterl2015fast}, random survival forests (RSF) \cite{ishwaran2008random}, gradient boosting \cite{wang2020boxhed}, were proposed. There were also attempts using neural networks for learning representations of covariates \cite{luck2017deep, katzman2018deep,lee2018deephit, ren2019deep, nagpal2021deep}. However, these NN-based methods do not fully exploit the power of NNs as they only use simple multi-layer perceptron, which is inherently limited in its learning capacity. More importantly, few of them are interpretable such that it is unclear what the black-box model learns and how we gain insight from the predictions for applications, e.g., identifying risk factors \cite{koene2016shared} and guiding design of clinical trials \cite{liu2021evaluating}. Readers may refer to \cite{wang2019machine} for a survey of ML-based survival analysis. 

Transformers were proposed by \citet{vaswani2017attention} for machine translation and have since been applied to extensive applications in natural language processing \cite{devlin2019bert,liu2019roberta,wang2022trial2vec}, computer vision \cite{dosovitskiy2020image,liu2021swin}, and data mining \cite{gorishniy2021revisiting,wang2022transtab}. Compared to these domains, it was much less used for survival analysis. In survival analysis, most related to ours are BERTSurv \cite{zhao2021bertsurv} and TDSA \cite{hu2021transformer} which both leverage transformers. BERTSurv extracts the word embeddings of clinical notes by BERT and combines them with other measurement features then feed them to an MLP for survival predictions, which is still an application of transformers for texts and can not deal with purely tabular data analysis. TDSA takes a single MLP layer to project all the input features into low-dimensional representations and differentiates these embeddings by adding time-aware positional embeddings. The self-attention interactions are taken between the patient embedding and the time. This approach ignores the interaction between features and relies on the manual settings of time embeddings, which results in the inapparent superiority of TDSA over the plain MLP-based baselines. However, \method encodes each feature in a low-dimensional embedding and takes full interactions between features with self-attention. Besides, \method applies to competing events and proves the effectiveness of transformers for large-scale survival analysis.


In competing events scenarios, many works assume each event is independent and handle each event separately by setting others to be censored \cite{ranganath2016deep,chapfuwa2018adversarial,nagpal2021deep}. The existing competing event analysis methods \cite{lee2018deephit,bellot2018multitask, bellot2018tree, lee2019dynamic,  tjandra2021hierarchical, rahman2021deeppseudo} try to weaken the event independent assumption but do not consider bias and imbalance in survival data. Selection bias in multi-event data, shown by Fig. \ref{fig:bias_sample}, causes naive multi-event loss a biased estimate of the true loss when the occurrences of events are covariate-dependent and the rare events are under-represented \cite{wang2020information}. This bias was hardly discussed in the survival analysis literature, which makes the most difference of our work from the others.

\section{Model Architecture}
In this section, we elaborate on the architecture and inference of \method.

\subsection{Problem Formulation}\label{sec:problem_formulation}
We assume our survival data consists of three parts $\mathcal{D}=\{(\bx_i, t_i, e_i)\}_{i=1}^n$, where $\bx_i \in \mathbb{R}^D$ is $D$ baseline covariates associated with the $i$-th patient; $t_i \in \mathbb{R}$ is the time at which an event of interest takes place or the time when the sample is censored; and $e_i$ shows whether $t_i$ is the event occurance or censored time. For single event scenarios, $e_i$ is a binary indicator; For competing events scenarios, $e_i \in \{0, 1, \dots, K_E \}$ tells which event happens at time $t_i$. When $e_i = 0$, the patient is said to be \emph{right-censored} (i.e., no event is recorded at the end of the study for patient $i$). 

The goal of survival analysis is to estimate the hazard and survival function. The survival function signifies the alive probability for one patient at time $t$, as $S(t) \triangleq \pr(T > t)$. The hazard function is defined by
\begin{equation} \label{eq:limit_hazard}
    \lambda (t) \triangleq \lim_{\Delta t \to 0} \frac{\pr(t \leq T < t+\Delta t \mid T \geq t)}{\Delta t},
\end{equation}
which corresponds to the probability of death at time $t$ given that the patient has survived up to that point. Likewise, we denote the probability mass function (PMF) of event time by $g(t) = \pr(T=t)$.

\subsection{\method: Main Architecture}
As illustrated in Fig. \ref{fig:architecture}, \method includes a baseline covariates embedding module, a deep-stacked attentive encoder module, and the alignment and subnetwork prediction module. Next we will present the technical details of these modules and then introduce how our model is used for training and inference.

\subsubsection{Input \& Embedding Module}
The raw baseline covariates describe the characteristics of patients. These covariates can be separated into two types:  categorical and numerical. We set $D_c$ and $D_n$ as the number of categorical and numerical covariates, respectively. We denote the number of covariates as $D = D_c + D_n$. Categorical covariates are usually transformed into one-hot vectors before entering into the survival model. Here, we represent them in a $d_e$-dimensional space through an embedding matrix $\V_{c} \in \mathbb{R}^{D_c \times d_e}$ as 
\begin{equation}
    \bt_i^m = \V_{c} \bx_i^m,
\end{equation}
where $\bx_i \in \mathbb{R}^{D_c}$ is a one-hot vector for the category field $m$ and $\bt_i^m$ is the yielded embedding.

To allow interactions between numerical and categorical covariates, we represent numerical covariates in a low-dimensional space by
\begin{equation}
    \bt^j_i = \bv_j \bx_i^j,
\end{equation}
where $\bx_i^j$ is a scalar for the $j$-th numerical field, $\V_n \in \mathbb{R}^{D_n \times d_e}$ is the embedding matrix for numerical features and $\bv_j$ is the $j$-th row of $\V_n$. With both two types of embeddings at hand, we can concatenate them to obtain the representation $\bt_i$ for all raw input covariates of the $i$-th patient, such that
\begin{equation}
    \bt_i = \bt_i^1 \otimes \bt_i^2 \cdots \otimes \bt_i^D,
\end{equation}
where $\otimes$ denotes concatenation operation.

\subsubsection{Encoder Module} To enable sufficient interactions between covariate embeddings, we use multi-head self-attention. The key-value attention mechanism allows the model to learn combinatorial interactions automatically. 

In detail, the key idea is to obtain the each $j$-th field processed covariate embedding $\widetilde{\bt}^j_i$ through a combination of other embeddings weighted by their correlation, as
\begin{equation}\label{eq:single_head}
        \widetilde{\bt}_i^{j} = \sum_{k=1}^D \alpha_{j,k} (\bW_{\text{value}} \bt_i^k),
\end{equation}
where $\alpha_{j,k}$ signifies the the relevance score between covariate $j$ and $k$; $\bW_{\text{value}}$ is a learnable weight matrix to transform the raw embeddings to the same space. We omit the subscript $i$ of $\bt$ for avoiding clutter notations. $\alpha_{j,k}$ is produced by the softmax outputs of attention function $\psi$ as
\begin{equation}
    \alpha_{j,k} = \frac{\exp(\psi(\bt^j,\bt^k))}{\sum_{k^\prime=1}^{D} \exp(\psi(\bt^j,\bt^{k^\prime}))},
\end{equation}
where the attention function $\psi(\cdot,\cdot)$ could be an arbitrary function that maps two embeddings to a real-value output. Here, we set it as
\begin{equation}
    \psi(\bt^j,\bt^k) = \langle \bW_{\text{query}} \bt^j, \bW_{\text{key}} \bt^k \rangle,
\end{equation}
which is an inner product of the transformed embedding by two weight matrices: $\bW_{\text{query}}$ and $\bW_{\text{key}}$. 

The attention in Eq. \eqref{eq:single_head} can be further enhanced by introducing $H$ multi-heads to yield a series of diverse processed embeddings $\{\widetilde{\bt}^{1,j} \dots \widetilde{\bt}^{H,j}\}$. Then the embedding for the field $j$ is obtained by concatenation of $\widetilde{\bt}^{h,j}$ from all heads
\begin{equation}
    \widetilde{\bt}^j = \widetilde{\bt}^{1,j}\otimes  \widetilde{\bt}^{1,j} \cdots \otimes \widetilde{\bt}^{H,j}.
\end{equation}
Likewise, we have $H$ pairs of attention parameters as $\bW_{\text{query}}^h$, $ \bW_{\text{value}}^h$, $\bW_{\text{key}}^h \forall h=1 \dots H$.

To reserve the information of the raw input embeddings, a residual connection is added to yield the final embedding
\begin{equation}
    \bt^j_{\text{res}} = \text{SELU}(\bW_{\text{res}} \widetilde{\bt}^j + \bt^j).
\end{equation}
$\text{SELU}(\cdot)$ denotes the Scaled Exponential Linear Unit (SELU) activation function \cite{klambauer2017self}. The output embedding of the transformer layer is then obtained by another $l_1$-layer feed-forward network (FFN) with residual connection
\begin{equation}
    \hat{\bt}^j = \text{SELU}(\bW^{(l_1)}_{\text{FFN}}( \cdots \bW^{(2)}_{\text{FFN}}( \text{SELU}(\bW^{(1)}_{\text{FFN}} \bt^j))) + \bt^j_{\text{res}}).
\end{equation}

In a nutshell, from the first transformer, we transform the raw embedding $\bt^j$ to the attentive embedding $\hat{\bt}^j$. To encourage further interactions between covariates to get high-order combinatorial embeddings, we can stack $l_2$ transformers such that
\begin{equation}
    \bt^j \to \hat{\bt}^{(1,j)} \to \cdots \hat{\bt}^{(l_2,j)},
\end{equation}
hence $\hat{\bt} = \hat{\bt}^{(l_2,1)} \otimes \cdots \otimes \hat{\bt}^{(l_2,D)}$ is the final representation for the patient generated by the stacked transformer encoder.

\subsubsection{Shared Representation \& Sub-networks Module}
\method builds a shared representation $\bt_{\text{SR}}$ from the encoder and serves for all downstream tasks, which enables the model to learn generalizable knowledge across all tasks and gets better performance for each task. Upon obtaining the representation $\hat{\bt}$, we design a shortcut to concatenate it with $\bt$, therefore utilize an alignment layer to transform them to the same space
\begin{equation}
    \bt_{\text{SR}} = \text{SELU}(\bW_{\text{SR}} (\hat{\bt} \otimes \bt)).
\end{equation}
Based on the shared representation $\bt_{\text{SR}}$, we can design many sub-networks for downstream tasks. These tasks can be split into two buckets: major tasks (survival analysis) and auxiliary tasks. Next, we will elaborate on how to deal with these tasks with different sub-network designs.

\section{Task Design for Learning}
\subsection{Task I: Single-Event Survival Analysis}
The ultimate goal of survival analysis is to estimate the survival function for individual patients, which is the PMF of survival time distribution. To make  continuous-time hazard rate prediction feasible for neural networks, we parameterize the discrete-time hazard rate of events by a sub-network. Consider a time point set $\mathbb{T} = \{\tau_1, \dots, \tau_m \}$ where $\tau_{m}$ is the pre-defined maximum follow-up time horizon. The discrete index set $\kappa(t) = \{1,\dots,m \}$. In this scenario, the hazard rate at time $\tau_j$ is defined by
\begin{equation}
    \lambda(\tau_j) = \pr(T = \tau_j \mid T > \tau_{j-1}),
\end{equation}
hence each corresponds to an output node of the sub-network. Likewise, we write the censored time as $T_C \in \mathbb{T}$.

First, let us take the single-event ($e\in \{0,1\}$) survival analysis as the case. Assuming $T$ and $T_C$ are independent, we can write the likelihood function  by 
\begin{equation}\label{eq:joint_likelihood}
    \begin{aligned}
    \pr(T=t,E=e) & = [\pr(T=t)\pr(T_C\geq t)]^e \\
     & \times [\pr(T>t) \pr(T_C = t)]^{1-e}.
\end{aligned}
\end{equation}
We can omit the terms which are only determined by censored time distribution (e.g., the PMF of censored time), then denote $S(t)$ and $g(t)$ by discrete hazard rate $\lambda(t)$ as
\begin{equation}
    g(\tau_j) = \lambda(\tau_j)S(\tau_{j-1}), \ S(\tau_j) = [1-\lambda(\tau_j)]S(\tau_{j-1}).
\end{equation}
With the assumption of constant hazard within each interval $[\tau_{j-1},\tau_j]$, the piecewise constant hazard (PCH) loss \cite{kvamme2019continuous} is defined by
\begin{equation} \label{eq:single_pch_loss}
    \ell_i = -e_i \log \lambda(t_i) +\lambda(t_i) \rho(t_i) +\sum_{j=1}^{\kappa(t_i)-1}\exp[-\lambda(\tau_j)],
\end{equation}
where $\rho(t)$ is the proportion of interval $\kappa(t)$ as time t as $\rho(t) = (t-\tau_{\kappa(t)-1}) / (\tau_{\kappa(t)} - \tau_{\kappa(t)-1})$. With the shared representation $\bt_{\text{SR}}$, the output network outputs the hazard rate prediction as
\begin{equation}
    \lambda(t) = \log \left [ 1 + \exp [f(\kappa(t)|\bt_{\text{SR}})] \right ],
\end{equation}
which is used for training by PCH loss in Eq. \eqref{eq:single_pch_loss}.

\begin{table*}[t]
  \caption{Descriptive statistics of datasets. BC and HD are shorthands of breast cancer and heart diseases, respectively.}

  \centering
  \renewcommand\arraystretch{1.2}
    \begin{tabular}{|c|c|c|c|c|c|c|c|c|c|c|}
    \hline
    \multicolumn{2}{|c|}{\multirow{2}{*}{\textbf{Dataset}}} & \multirow{2}{*}{\textbf{No. Events}} & \multirow{2}{*}{\textbf{No. Censored}} & \textbf{No. Covariates} & \multicolumn{3}{c|}{\textbf{Event Duration}} & \multicolumn{3}{c|}{\textbf{Censoring Time}} \\
\cline{5-11}    \multicolumn{2}{|c|}{} &       &       & (real, categorical) & min   & max   & mean  & min   & max   & mean \\
    \hline
    \multicolumn{2}{|c|}{\textbf{METABRIC}} & 1,103 (57.9\%) & 801 (42.1\%) & 9 (5, 4) & 0.1   & 355.2 & 99.9  & 0     & 337   & 159.5 \\
\cline{1-2}    \multicolumn{2}{|c|}{\textbf{SUPPORT}} & 6,036 (68.0\%) & 2,837 (32.0\%) & 14 (8, 6) & 3     & 1944  & 205.4 & 344   & 2029  & 1059.8 \\
    \hline
    \multirow{2}{*}{\textbf{SEER}} & \textbf{BC}    & 87,495 (18.4\%) & \multirow{2}{*}{367,702 (77.1\%)} & \multirow{2}{*}{18 (4, 14)} & 1     & 121   & 40.2  & \multirow{2}{*}{1} & \multirow{2}{*}{121} & \multirow{2}{*}{74.7} \\
          & \textbf{HD}    & 21,549 (4.5\%) &       &       & 1     & 121   & 53.4  &       &       &  \\
    \hline
    \end{tabular}%

  \label{tab:data}%
\end{table*}%

\subsection{Task II: Debiasing Competing Events Survival Analysis}
In competing events scenarios, $e$ is no longer a binary indicator. Instead, we have $K_E$ competing events as $e \in \{1,\dots, K_E\}$. Denote $\mathbbm{1}_{ik}=\mathbbm{1}\{e_i=k\}$ and $\ell_{ik}$ is $\ell_i$ given $e_i=k$, a naive adaptation from Eq. \eqref{eq:single_pch_loss} for competing events is to take 
\begin{equation}
    L_{\text{naive}} \triangleq \frac1{\sum_{i,k}\mathbbm{1}_{ik}} \sum_{i,k} \mathbbm{1}_{ik} \ell_{ik}.
\end{equation}
The hazard prediction is performed by the attached cause-specific (CS) sub-networks as $\lambda_k(t) = \log [1 + \exp [f_k(\kappa(t)|\bt_{\text{SR}})] ]$ for $k=1,\dots, K_E$.

However, $ L_{\text{naive}}$ assumes events are independent but in reality ar often biased by the common events. This bias exaggerates with more imbalanced event distribution, which causes poor performance. Unfortunately, imbalanced event distribution is common in the real world. For instance, the $15\%$ events in SEER data used \cite{lee2018deephit} are breast cancer, while only $1\%$ are cardiovascular diseases. To resolve it, we leverage the inverse propensity score (IPS) technique for debiasing. Denote $\pi_{ik} = \pr(e_i=k|\phi,\bx)$ as the estimate of $\pr(e_i=k)$, a.k.a propensity score, we derive a novel IPS-based PCH loss as
\begin{equation}
      L_{\text{IPS}} \triangleq \frac1{nK_E} \sum_{i,k} \frac{ \mathbbm{1}_{ik} \ell_{ik}}{\pi_{ik}}.  
\end{equation}
Here, $\phi(\bx)$ is a logistic regression model
\begin{equation} \label{eq:IPS_estimator}
    \pi_{ik} = \phi(\bx_i) \triangleq \sigma(\mathbf{w}^{\top}\bx_i + \beta),
\end{equation}
where $\beta$ denotes the offset; $\sigma(\cdot)$ is sigmoid function. Please refer to Appendix \ref{sec:appx_lh_ips} for the proof of why $ L_{\text{IPS}}$ is unbiased with further explanation.

\subsection{Auxiliary Tasks: Multi-task Learning}
Multi-task Learning (MTL) puts the model to learn from multiple related tasks with shared representations, thus enabling the model to generalize better on the targeted task. Besides survival analysis, we design two auxiliary tasks for enhancing the representation learning: mortality prediction (MP) and length-of-stay prediction (LS). 

For the mortality prediction task, we urge the model to learn to predict if there will be an event happening ($\delta = 1$ if $e>0$) during the whole time
\begin{equation}
    L_{\text{MP}} = -\frac1n \sum_i \left[\delta_i \log \hat{y}_i + (1-\delta_i) \log (1 - \hat{y}_i) \right],
\end{equation}
where $\hat{y}_i$ is predicted by the task-specific (TS) sub-network.

For the length-of-stay prediction task, the model predicts how much time the event happens or becomes censored after the initial observation
\begin{equation}
    L_{\text{LS}} = \frac1n \sum_i (\hat{t}_i - t_i)^2,
\end{equation}
where $\hat{t}_i$ is the predicted event time. Afterwards, we can write the final loss function by
\begin{equation}
    \mathcal{L} = L_{\text{IPS}} + \gamma_1 L_{\text{MP}} + \gamma_2 L_{\text{LS}}.
\end{equation}
Two hyperparameters $\gamma_1$ and $\gamma_2$ can be set to 1 initially and then annealed in the following training.

\begin{table*}[t]
  \caption{$C^{\text{td}}$ for \textbf{METABRIC} and \textbf{SUPPORT} datasets at different quantiles of event times; Values in the bracket show the standard deviation of performances of 10 runs.}

\renewcommand\arraystretch{1.2}
  \centering
    \begin{tabular}{|c|c|c|c|c|c|c|}
    \hline
    \multirow{2}{*}{\textbf{Algorithms}} & \multicolumn{3}{c|}{\textbf{METABRIC}} & \multicolumn{3}{c|}{\textbf{SUPPORT}} \\
\cline{2-7}          & $25\%$  & $50\%$   & $75\%$  & $25\%$  & $50\%$   & $75\%$ \\
    \hline
    CPH & 0.628(0.024) & 0.627(0.020) & 0.632(0.016) & 0.549(0.017) & 0.564(0.004) & 0.586(0.005) \\
    \hline
    DeepSurv & 0.660(0.028) & 0.648(0.022) & 0.644(0.018) & 0.594(0.013) & 0.591(0.007) & 0.605(0.006) \\
    \hline
    DeepHit & 0.712(0.026) & 0.657(0.023) & 0.603(0.014) & 0.650(0.009) & 0.602(0.013) & 0.574(0.009) \\
    \hline
    RSF & 0.698(0.029) & 0.658(0.022) & 0.630(0.017) & 0.660(0.005) & 0.621(0.006) & 0.602(0.006) \\
    \hline
    PC-Hazard & 0.713(0.024) & 0.680(0.017) & 0.644(0.017) & 0.652(0.011) & 0.620(0.008) & 0.607(0.008) \\
    \hline
    DSM   & 0.707(0.023) & 0.663(0.014) & 0.636(0.017) & 0.640(0.007) & 0.609(0.007) & 0.596(0.008) \\
    \hline \hline
    \method$_{\text{w/o MTL}}$ & 0.722(0.022) & 0.686(0.010) & 0.649(0.017) & 0.665(0.008) & 0.630(0.006) & 0.614(0.005) \\
    \hline
    \method & \textbf{0.728(0.019)} & \textbf{0.690(0.013)} & \textbf{0.655(0.013)} & \textbf{0.670(0.008)} & \textbf{0.633(0.006)} & \textbf{0.617(0.004)} \\
    \hline

    \end{tabular}%
  \label{tab:ctd_single}%
\end{table*}%

\begin{table*}[t]
  \caption{$C^{\text{td}}$ for competing risks on \textbf{SEER} dataset; Values in the bracket show the standard deviation of 10 runs.}
\renewcommand\arraystretch{1.2}
  \centering
    \begin{tabular}{|c|c|c|c|c|c|c|}
    \hline
    \multirow{2}{*}{\textbf{Algorithms}} & \multicolumn{2}{c|}{$25\%$} & \multicolumn{2}{c|}{$50\%$} & \multicolumn{2}{c|}{$75\%$} \\
\cline{2-7}          & HD    & Breast Cancer & HD     & Breast Cancer & HD     & Breast Cancer \\
    \hline
    CS-CPH & N/A   & 0.828(1.5e-3) & N/A   & 0.799(1.1e-3) & N/A   & 0.781(7e-4) \\
    \hline
    CS-PC-Hazard & 0.774(5.1e-3) & 0.895(1.7e-3) & 0.769(3.3e-3) & 0.875(1.6e-3) & 0.766(3.9e-3) & 0.858(4e-4) \\
    \hline
    DeepHit & 0.763(1.6e-2) & 0.896(2.2e-3) & 0.748(1.5e-2) & 0.875(2.7e-3) & 0.724(1.13-2) & 0.853(1.6e-3) \\
    \hline
    DSM   & 0.765(4.6e-3) & 0.895(1.2e-3) & 0.761(3.9e-3) & 0.873(2.0e-3) & 0.750(2.3e-3) & 0.856(1.2e-3) \\
    \hline \hline
    \method$_{\text{w/o IPS}}$ & 0.789(6.3e-3) & 0.902(1.2e-3) & 0.780(5.0e-3) & 0.882(1.3e-3) & 0.768(2.7e-3) & 0.864(9e-4) \\
    \hline
    \method$_{\text{w/o MTL}}$ & 0.793(6.4e-3) & 0.903(1.1e-3) & 0.784(5.4e-3) & 0.881(1.5e-3) & 0.768(3.1e-3) & 0.863(5e-4) \\
    \hline
    \method & \textbf{0.797(6.2e-3)} & \textbf{0.904(1.2e-3)} & \textbf{0.788(5.5e-3)} & \textbf{0.883(1.2e-3)} & \textbf{0.775(3.1e-3)} & \textbf{0.866(5e-4)} \\
    \hline
    \end{tabular}%
  \label{tab:ctd_compete}%
\end{table*}%

\section{Experiment}
In this section, we resort to focus on the following four research questions:
\begin{itemize}[leftmargin=*, itemsep=0pt, labelsep=5pt]
\item \textbf{RQ1.} Does high-order covariates interaction with transformers encourage better performance?
\item \textbf{RQ2.} How much does selection bias harm the competing events survival analysis?
\item \textbf{RQ3.} Does MTL help learn a stronger encoder of \method for survival analysis?
\item \textbf{RQ4.} What is the insight \method can offer by its interpretable function?
\end{itemize}
We will first present the setups then discuss each of the RQs one by one.

\subsection{Experimental Setup}
\subsubsection{Datasets}
For single event survival analysis, we evaluate models on two real-world medical datasets: Study to Understand Prognoses Preferences Outcomes and Risks of Treatment (\textbf{SUPPORT}) \cite{knaus1995support} and Molecular Taxonomy of Breast Cancer International Consortium (\textbf{METABRIC}) \cite{curtis2012genomic}. For competing events, we collect and proceed with the data from Surveillance, Epidemiology, and End Results Program (\textbf{SEER})\footnote{\url{https://seer.cancer.gov/}}.

\textbf{SUPPORT.} It is a multicenter study designed to examine outcomes and clinical decision-making for seriously ill hospitalized patients. The version we utilize comes from the pycox\footnote{\url{https://github.com/havakv/pycox}} package \cite{kvamme2019continuous} following the preprocessing steps in \cite{katzman2018deep}.

\textbf{METABRIC.} This data uses gene and protein expression profiles to determine new breast cancer subgroups to help physicians provide better treatment. We utilize the data from pycox as well.

\textbf{SEER.} This is an authoritative source for cancer statistics in the US. We select breast cancer patients registered from 2004 to 2014, with the follow-up period restricted to 10 years. Among all these patients, we select who also suffer from heart diseases, which renders a large-scale dataset with 476,746 patients. Therefore, we treat breast cancer and heart diseases as two competing events. We include 18 covariates, including age, race, gender, diagnostic confirmation, morphology information (primary site, laterality, histologic type, etc.), tumor information (size, type, number, etc.), and surgery information. We fill missing values with the mean of numerical covariates and mode of categorical covariates. The statistics of all datasets are available in Table \ref{tab:data}. 

\subsubsection{Evaluation Metrics}
We make use of time-dependent concordance index ($C^{\text{td}}$) \cite{antolini2005time} for the performance evaluation of $k$-th event, as
\begin{equation}
\begin{aligned}
    C^{\text{td}}(\tau,k) = \pr  \{S_k(\tau|\bx_i) & > S_k(\tau|\bx_j)  \mid e_i=k, \\
   & t_i < t_j, t_i \leq \tau, k>0  \}.
\end{aligned}
\end{equation}
Here, $S_k(t|\bx_i)$ is the predicted survival function considering the $k$-th event at the truncation time $\tau$. We adjust the estimate with an inverse probability of censoring weighted (IPCW) estimate to obtain an unbiased estimate following \cite{uno2011c}. Since $C^{\text{td}}$ at different time horizons indicate how models capture the possible changes in risk over time, we follow \cite{nagpal2021deep} to report $C^{\text{td}}$ at different truncated time quantiles of $25\%$, $50\%$, and $75\%$. 

\subsubsection{Baselines}
We pick the following baselines for comparison: Cox Proportional Hazards (CPH) \cite{cox1972regression}, Random Survival Forests (RSF) \cite{ishwaran2008random}, DeepSurv \cite{katzman2018deep}, DeepHit \cite{lee2018deephit}, Piecewise Constant Hazard (PC-Hazard) \cite{kvamme2019continuous}, and Deep Survival Machines (DSM) \cite{nagpal2021deep}.

For competing event survival analysis, we pick DeepHit and DSM, which apply to these cases. We also utilize cause-specific CPH (CS-CPH) and cause-specific PC-Hazard (CS-PC-Hazard) by assuming the independence of events \cite{haller2013applying}. These CS-based methods learn from each competing event separately by treating others as censored.

\begin{figure}[t]
  \begin{subfigure}[b]{0.23\textwidth}
    \includegraphics[width=\textwidth]{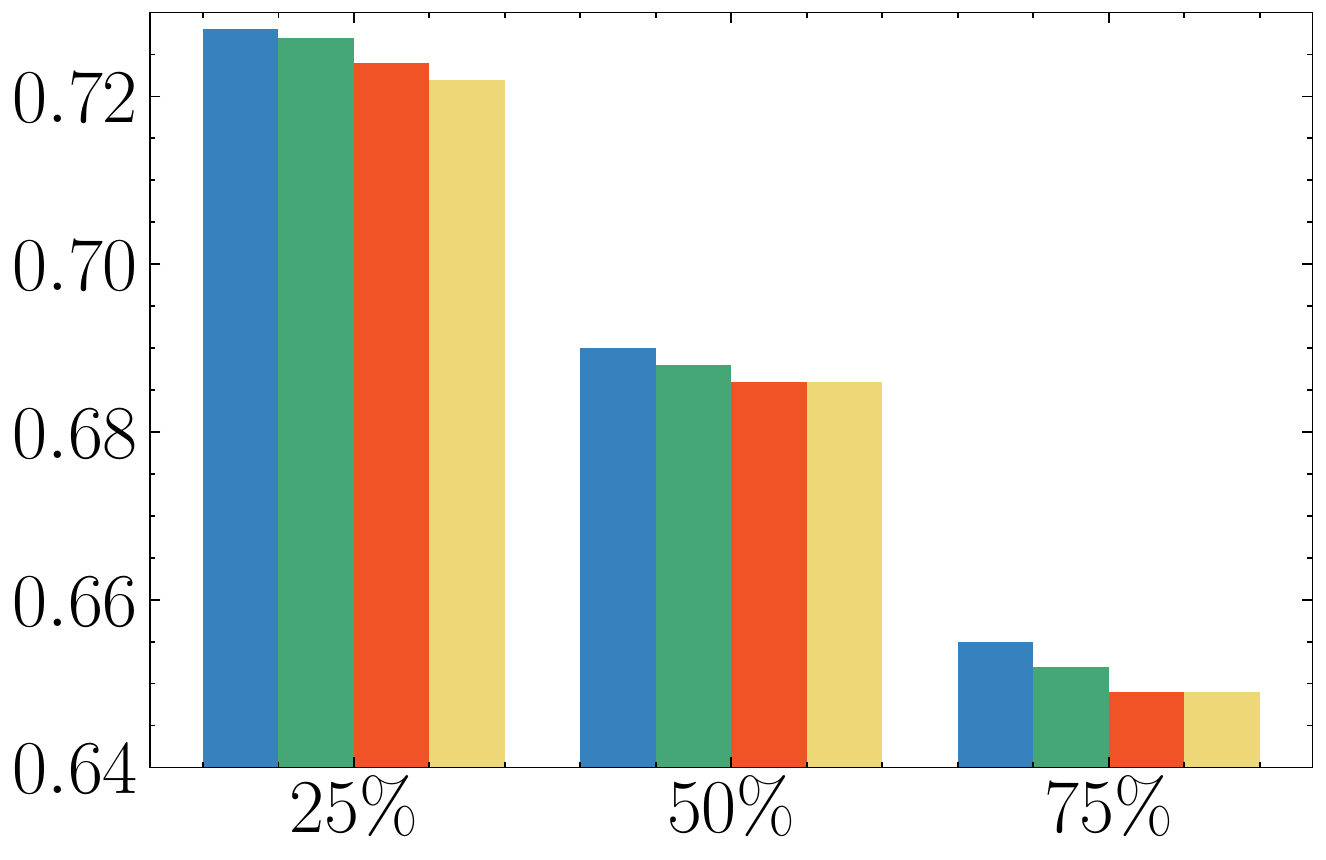}
    \caption{On \textbf{METABRIC}.}
  \end{subfigure}
  \hfill
  \begin{subfigure}[b]{0.23\textwidth}
    \includegraphics[width=\textwidth]{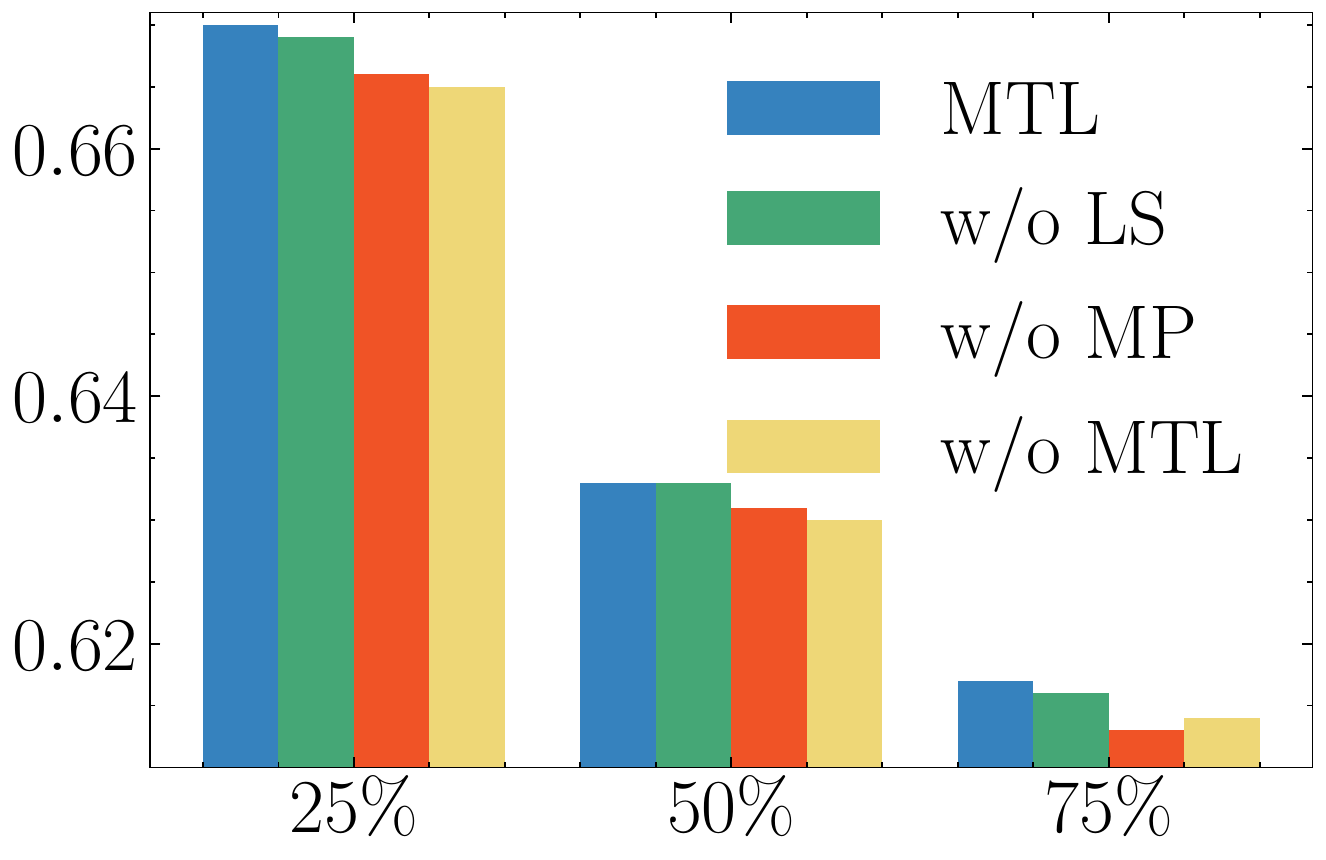}
    \caption{On \textbf{SUPPORT}.}
  \end{subfigure}
  \caption{Ablation study of MTL for \method on \textbf{METABRIC} and \textbf{SUPPORT}. We compare the full MTL model with models without length-of-stay (LS) prediction, mortality prediction (MP), and without MTL, respectively. $x$-axis and $y$-axis are time horizons and $C^{\text{td}}$. \label{fig:mtl_ablation}}
\end{figure}

\subsubsection{Implementation}
We use Adam \cite{kingma2014adam} as the optimizer to train \method in all experiments, with learning rate in $\{1e^{-4}, 1e^{-3}\}$, weight decay in $\{1e^{-3},1e^{-4},0\}$. The number of transformer layers is chosen from $\{2,3,4\}$, the embedding size is selected from $\{8,16\}$, the intermediate layer size is picked from $\{32, 64\}$, and the number of attention heads is set from $\{1, 2, 4\}$. The cause-specific and task-specific sub-networks are MLPs with one or two layers with the same intermediate size as the transformers and ReLU activation.

We use $30\%$ data as the test set,  $10\%$ data as the validation set, and $60\%$ as the training set. We report the mean and standard deviation of metrics with 10 multiple runs on different train/validation/test splits.

\subsection{Performance Comparison in Single Event (RQ1)}
We compare \method with baselines on single event datasets. Results are reported in Table \ref{tab:ctd_single}, where the best-performing method is shown in bold. We find that \method without MTL consistently outperforms all baselines across two datasets in terms of $C^{\text{td}}$ under all time horizons. The reasons for this improvement are multi-facet: (1) \method leverages multi-head attention to build rich interactions among covariates and dynamically adjust to different inputs; (2) The stacked attentive encoders provide higher-order covariate conjunctions, thus mining more complicated patterns from the survival data. Furthermore, \method gets better results when engaged with MTL.

We also observe that all methods experienced performance deterioration on longer horizons like $75\%$. As more patients are involved in the evaluation, it is harder for models to predict the orders of event time of all patients. Nevertheless, \method performs better across for all horizons with the relative improvement of $2.1\%,1.5\%,1.7\%$ on METABRIC and $1.5\%,1.9\%,1.6\%$ on SUPPORT over the best baselines.

\subsection{Study of \method in Competing Events  (RQ2)}
We compare \method with baselines on \textbf{SEER} data. Results are reported in Table \ref{tab:ctd_compete}. N/A in the table means the method does not converge for the corresponding event analysis. Through experiments, we find:
\begin{itemize}[leftmargin=*, itemsep=0pt, labelsep=5pt]
\item In competing events scenarios, the performance for rare events is much worse than for common events while \method works better on HD and BC than baselines. We credit it to (1) IPS offers an unbiased estimate of objectives hence assisting in balancing predictions for events. In contrast, without IPS, \method has around 0.1 reduction of $C^{\text{td}}$ on HD on the $25\%$ horizon; (2) MTL further enhances the generalizability of representations thus yielding a better performance for the target task.

\item Except for CS-CPH, other baselines have similar performance for both HD and BC. \method wins over baselines by a significant margin: it reaches 0.797, 0.788, 0.775 for HD on each time horizon, respectively, which are $3.0\%, 2.5\%, 1.2\%$ better than the best baselines. This demonstrates the usability of the cutting-edge deep learning techniques for gaining improvement for survival analysis. More importantly, comparing to the reduced version, \method with MTL and IPS gets the best performance, which signifies the need of considering selection bias in competing events and using MTL to make the best of the data.
\end{itemize}

\subsection{In-depth Analysis (RQ3 \& RQ4)}

\subsubsection{Multi-task Learning}
We analyze the utility that multi-task learning can add, results are shown on the bottom row of Tables \ref{tab:ctd_single} \& \ref{tab:ctd_compete}. It illustrates that MTL suffices to improve the training across all datasets. Specifically, on SEER, \method achieves much better performance than \method on the rare event HD. We owe this to the auxiliary mortality prediction and time prediction tasks which enhance representation learning for transformers, thus utilizing the survival data more sufficiently.

To study the contribution of each task, we conduct an ablation study, shown by Fig. \ref{fig:mtl_ablation}. We identify that both auxiliary tasks strengthen the model performance, as the model w/o MTL has the worst $C^{\text{td}}$. It validates the effectiveness of MTL in enhancing representation learning of \method. Note that the MP task renders more gain than the LS task, which shows that the MP task is more relevant to survival analysis. It tells that designing auxiliary tasks similar to major task benefit the model more.

\begin{figure}[t]
    \centering
    \includegraphics[width=0.48\textwidth]{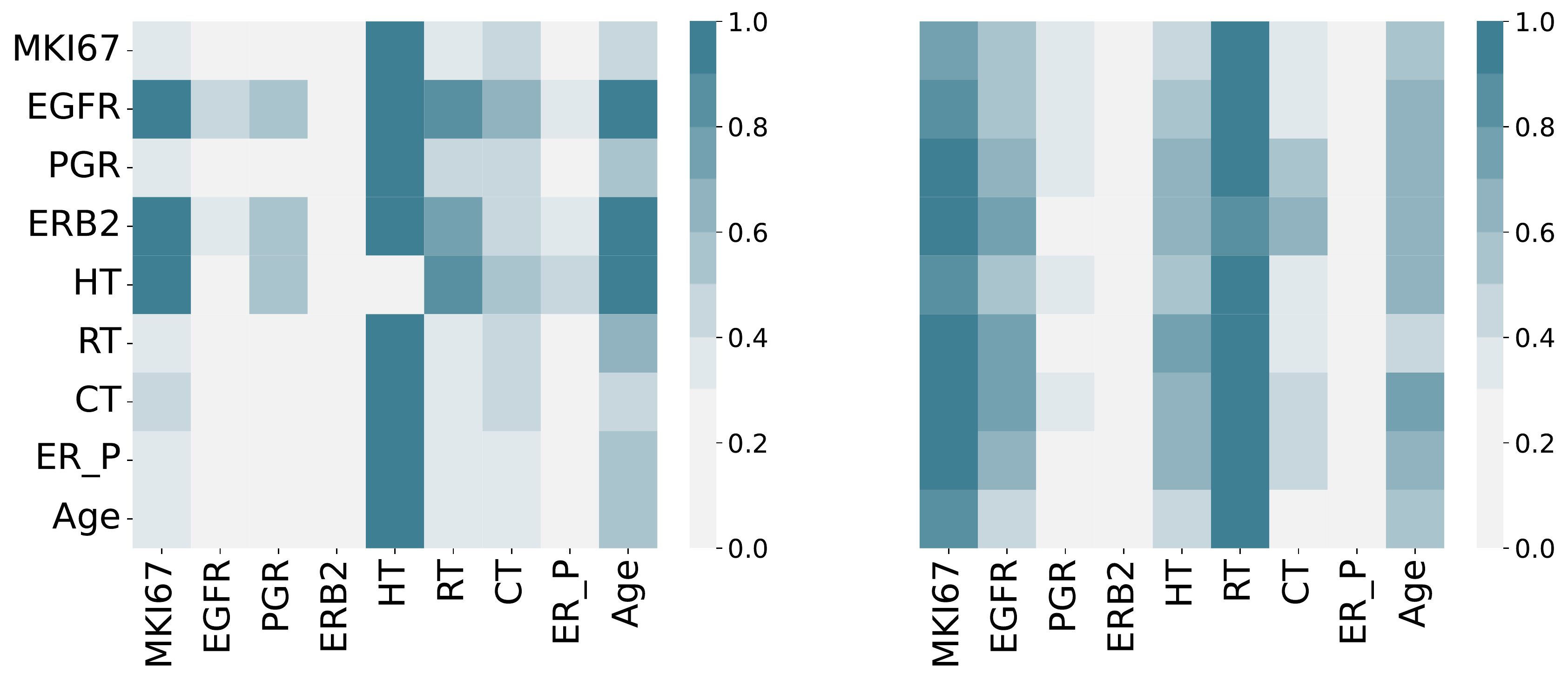}
    \caption{Visualization of attention scores between covariates for two patients; MKI67, EGFR, PGR, ERB2, and ER, are gene biomarkers; HT: Hormone treatment; RT: Radiotherapy; CT: Chemotherapy; ER\_P: ER Positive.}
    \label{fig:case_att}
\end{figure}

\begin{figure}[t]
  \begin{subfigure}[b]{0.23\textwidth}
    \includegraphics[width=\textwidth]{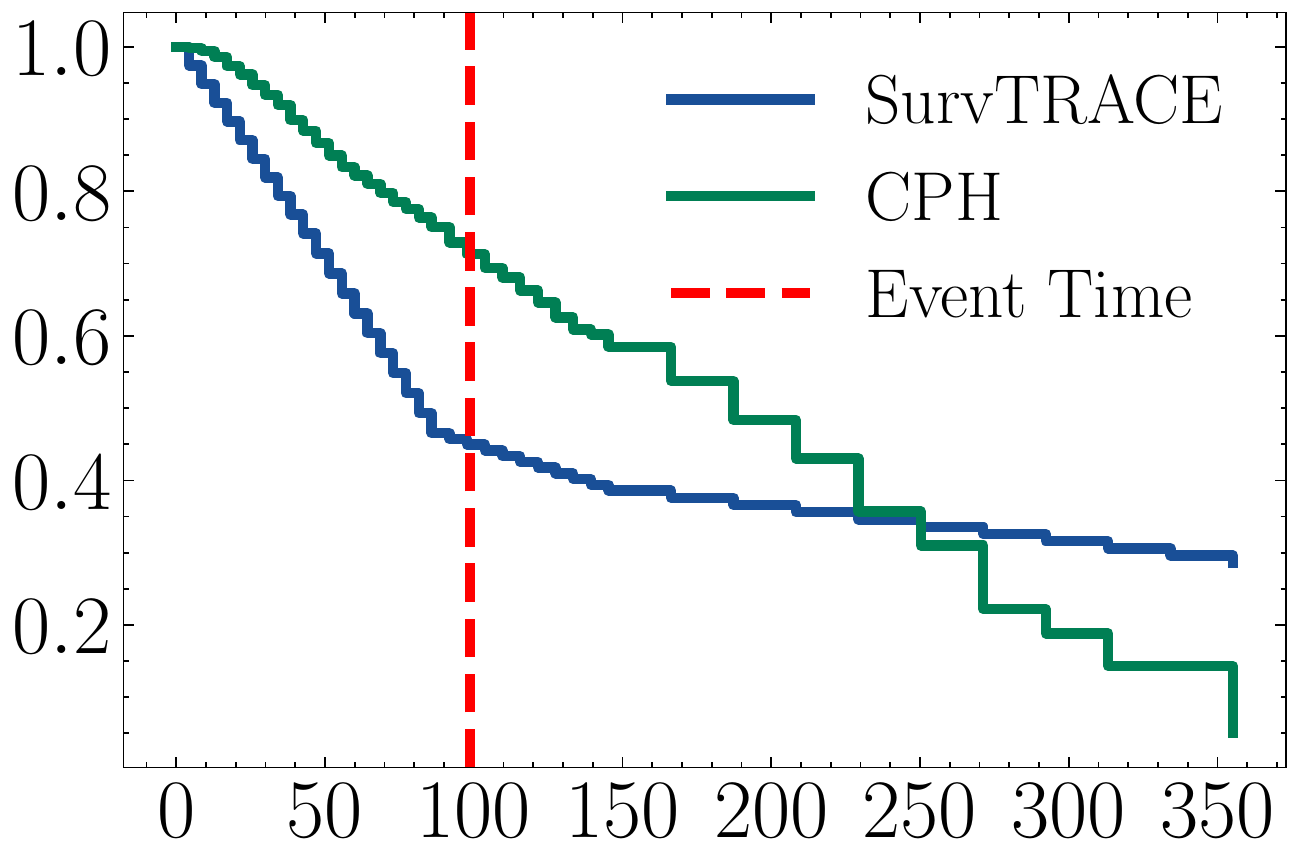}
    \caption{Predicted survival function for \textbf{uncensored} data. \label{fig:vis_event}}
  \end{subfigure}
  \hfill
  \begin{subfigure}[b]{0.23\textwidth}
    \includegraphics[width=\textwidth]{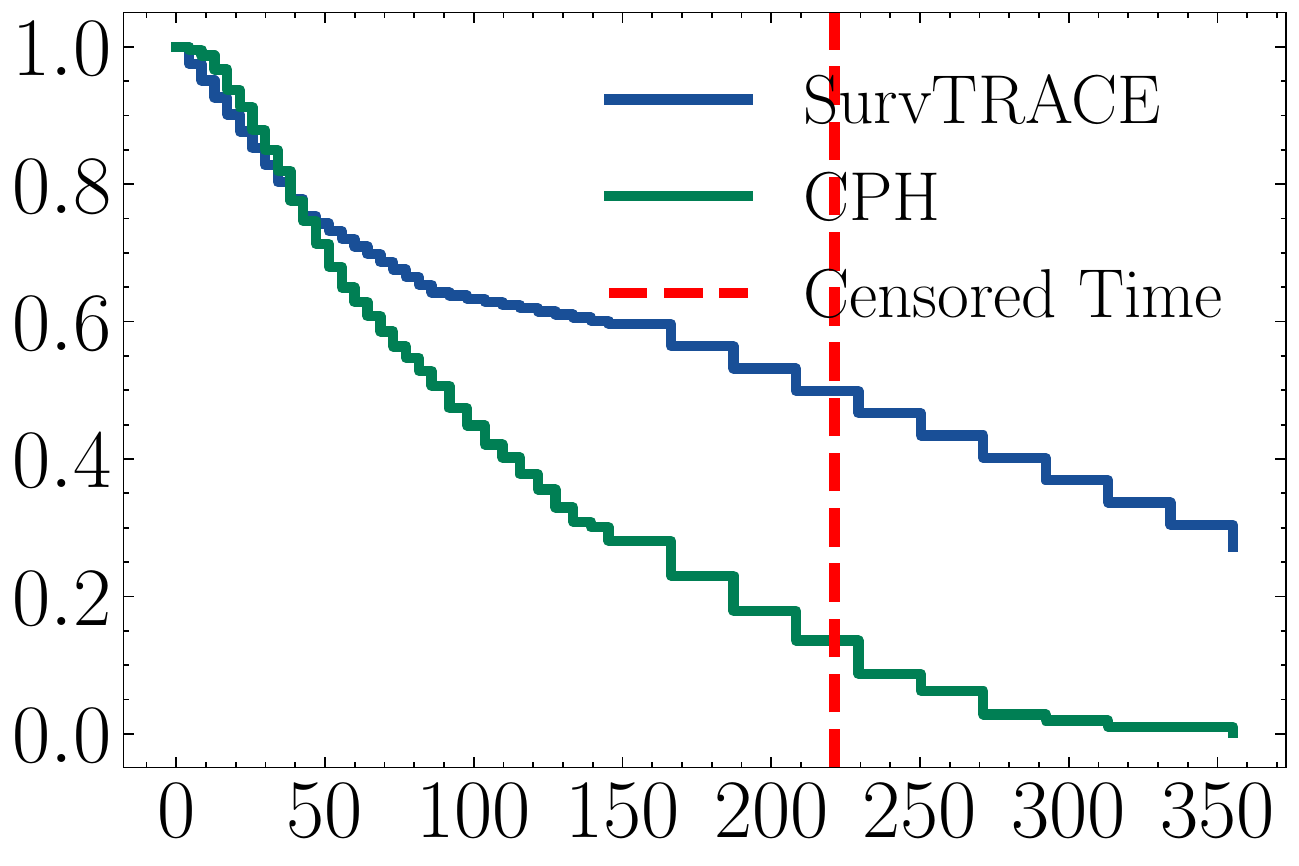}
    \caption{Predicted survival function for \textbf{censored} data.  \label{fig:vis_censored}}
  \end{subfigure}
\caption{Predicted survival function for an individual (by interpolation) by \method and CPH, respectively. $x$-axis shows the duration time points; $y$-axis shows the probability. Dotted lines stand on the point where the event/censoring happens. \label{fig:vis_case_study}}
\end{figure}

\subsubsection{Interpretability} We investigate the attention scores between different covariates. Samples of two patients are shown in Fig. \ref{fig:case_att}. The first patient (on the left) takes hormone treatment. Likewise, the HT indicator shows a significant correlation to almost all the rest covariates, which makes it an important factor for predicting the outcomes. The second patient (on the right) takes radiotherapy, and we observe the same degree of saliency in the case. For both two patients, the treatment indicator is deeply correlated to their age, pointing out the influence of age on the effectiveness of specific therapies.

An interesting finding is that the ER-positive indicator seems to have a low effect on other factors. About $85\%$ of all breast cancers are Estrogen Receptor (ER) positive, which means most patients have the same value for this term. The model downweighs the attention over it because it does not offer much additional information for discriminating survival probability across patients. Moreover, we identify the MKI67 biomarker plays a significant role in other factors, including EGFR and ERB2.

\subsubsection{Case Study}
To better visualize the superiority of \method, we plot examples of predicted survival functions by \method and CPH, respectively (Fig. \ref{fig:vis_case_study}). We identify that for uncensored data, our model senses the event happening, and the predicted probability decreases sharply. On the contrary, CPH fails to capture the signal of events and gives relatively even hazard prediction across the whole period. On the other hand, for the censored data, \method also succeeds in maintaining a high survival probability before the censored time point.

\section{Conclusion}
In summary, we propose a multi-task transformer-based survival analysis network, namely \method, which can handle both censored data and competing risks. Specifically, we take the implicit bias in censoring survival data into account and propose to debias through counterfactual learning. We also design two auxiliary tasks to utilize limited survival data for representation learning of \method. According to the visualization of the attention module engaged in \method, we can provide a case-by-case explanation for each individual. Our future work will further take time-varying covariates and multimodal data into consideration for enhancing survival analysis.

\section*{Acknowledgement}
This work was supported by NSF award SCH-2014438, IIS-1838042, NIH award R01 1R01NS107291-01 and OSF Healthcare.

\bibliographystyle{ACM-Reference-Format}
\bibliography{sample-base}


\begin{thebibliography}{48}


\ifx \showCODEN    \undefined \def \showCODEN     #1{\unskip}     \fi
\ifx \showDOI      \undefined \def \showDOI       #1{#1}\fi
\ifx \showISBNx    \undefined \def \showISBNx     #1{\unskip}     \fi
\ifx \showISBNxiii \undefined \def \showISBNxiii  #1{\unskip}     \fi
\ifx \showISSN     \undefined \def \showISSN      #1{\unskip}     \fi
\ifx \showLCCN     \undefined \def \showLCCN      #1{\unskip}     \fi
\ifx \shownote     \undefined \def \shownote      #1{#1}          \fi
\ifx \showarticletitle \undefined \def \showarticletitle #1{#1}   \fi
\ifx \showURL      \undefined \def \showURL       {\relax}        \fi
\providecommand\bibfield[2]{#2}
\providecommand\bibinfo[2]{#2}
\providecommand\natexlab[1]{#1}
\providecommand\showeprint[2][]{arXiv:#2}

\bibitem[Antolini et~al\mbox{.}(2005)]%
        {antolini2005time}
\bibfield{author}{\bibinfo{person}{Laura Antolini}, \bibinfo{person}{Patrizia
  Boracchi}, {and} \bibinfo{person}{Elia Biganzoli}.}
  \bibinfo{year}{2005}\natexlab{}.
\newblock \showarticletitle{A time-dependent discrimination index for survival
  data}.
\newblock \bibinfo{journal}{\emph{Statistics in Medicine}}
  \bibinfo{volume}{24}, \bibinfo{number}{24} (\bibinfo{year}{2005}),
  \bibinfo{pages}{3927--3944}.
\newblock


\bibitem[Bellot and Schaar(2018a)]%
        {bellot2018multitask}
\bibfield{author}{\bibinfo{person}{Alexis Bellot} {and}
  \bibinfo{person}{Mihaela Schaar}.} \bibinfo{year}{2018}\natexlab{a}.
\newblock \showarticletitle{Multitask boosting for survival analysis with
  competing risks}.
\newblock \bibinfo{journal}{\emph{Advances in Neural Information Processing
  Systems}}  \bibinfo{volume}{31} (\bibinfo{year}{2018}),
  \bibinfo{pages}{1390--1399}.
\newblock


\bibitem[Bellot and Schaar(2018b)]%
        {bellot2018tree}
\bibfield{author}{\bibinfo{person}{Alexis Bellot} {and}
  \bibinfo{person}{Mihaela Schaar}.} \bibinfo{year}{2018}\natexlab{b}.
\newblock \showarticletitle{Tree-based {B}ayesian mixture model for competing
  risks}. In \bibinfo{booktitle}{\emph{International Conference on Artificial
  Intelligence and Statistics}}. PMLR, \bibinfo{pages}{910--918}.
\newblock


\bibitem[Casta{\~n}eda and Gerritse(2010)]%
        {castaneda2010appraisal}
\bibfield{author}{\bibinfo{person}{Javier Casta{\~n}eda} {and}
  \bibinfo{person}{Bart Gerritse}.} \bibinfo{year}{2010}\natexlab{}.
\newblock \showarticletitle{Appraisal of several methods to model time to
  multiple events per subject: modelling time to hospitalizations and death}.
\newblock \bibinfo{journal}{\emph{Revista Colombiana de Estad{\'\i}stica}}
  \bibinfo{volume}{33}, \bibinfo{number}{1} (\bibinfo{year}{2010}),
  \bibinfo{pages}{43--61}.
\newblock


\bibitem[Chapfuwa et~al\mbox{.}(2018)]%
        {chapfuwa2018adversarial}
\bibfield{author}{\bibinfo{person}{Paidamoyo Chapfuwa},
  \bibinfo{person}{Chenyang Tao}, \bibinfo{person}{Chunyuan Li},
  \bibinfo{person}{Courtney Page}, \bibinfo{person}{Benjamin Goldstein},
  \bibinfo{person}{Lawrence~Carin Duke}, {and} \bibinfo{person}{Ricardo
  Henao}.} \bibinfo{year}{2018}\natexlab{}.
\newblock \showarticletitle{Adversarial time-to-event modeling}. In
  \bibinfo{booktitle}{\emph{International Conference on Machine Learning}}.
  PMLR, \bibinfo{pages}{735--744}.
\newblock


\bibitem[Cox(1972)]%
        {cox1972regression}
\bibfield{author}{\bibinfo{person}{David~R Cox}.}
  \bibinfo{year}{1972}\natexlab{}.
\newblock \showarticletitle{Regression models and life-tables}.
\newblock \bibinfo{journal}{\emph{Journal of the Royal Statistical Society:
  Series B (Methodological)}} \bibinfo{volume}{34}, \bibinfo{number}{2}
  (\bibinfo{year}{1972}), \bibinfo{pages}{187--202}.
\newblock


\bibitem[Curtis et~al\mbox{.}(2012)]%
        {curtis2012genomic}
\bibfield{author}{\bibinfo{person}{Christina Curtis}, \bibinfo{person}{Sohrab~P
  Shah}, \bibinfo{person}{Suet-Feung Chin}, \bibinfo{person}{Gulisa
  Turashvili}, \bibinfo{person}{Oscar~M Rueda}, \bibinfo{person}{Mark~J
  Dunning}, \bibinfo{person}{Doug Speed}, \bibinfo{person}{Andy~G Lynch},
  \bibinfo{person}{Shamith Samarajiwa}, \bibinfo{person}{Yinyin Yuan},
  {et~al\mbox{.}}} \bibinfo{year}{2012}\natexlab{}.
\newblock \showarticletitle{The genomic and transcriptomic architecture of
  2,000 breast tumours reveals novel subgroups}.
\newblock \bibinfo{journal}{\emph{Nature}} \bibinfo{volume}{486},
  \bibinfo{number}{7403} (\bibinfo{year}{2012}), \bibinfo{pages}{346--352}.
\newblock


\bibitem[Devlin et~al\mbox{.}(2019)]%
        {devlin2019bert}
\bibfield{author}{\bibinfo{person}{Jacob Devlin}, \bibinfo{person}{Ming-Wei
  Chang}, \bibinfo{person}{Kenton Lee}, {and} \bibinfo{person}{Kristina
  Toutanova}.} \bibinfo{year}{2019}\natexlab{}.
\newblock \showarticletitle{{BERT}: Pre-training of deep bidirectional
  transformers for language understanding}. In
  \bibinfo{booktitle}{\emph{Conference of the North American Chapter of the
  Association for Computational Linguistics}}. \bibinfo{pages}{4171--4186}.
\newblock


\bibitem[Dosovitskiy et~al\mbox{.}(2020)]%
        {dosovitskiy2020image}
\bibfield{author}{\bibinfo{person}{Alexey Dosovitskiy}, \bibinfo{person}{Lucas
  Beyer}, \bibinfo{person}{Alexander Kolesnikov}, \bibinfo{person}{Dirk
  Weissenborn}, \bibinfo{person}{Xiaohua Zhai}, \bibinfo{person}{Thomas
  Unterthiner}, \bibinfo{person}{Mostafa Dehghani}, \bibinfo{person}{Matthias
  Minderer}, \bibinfo{person}{Georg Heigold}, \bibinfo{person}{Sylvain Gelly},
  {et~al\mbox{.}}} \bibinfo{year}{2020}\natexlab{}.
\newblock \showarticletitle{An Image is Worth 16x16 Words: Transformers for
  Image Recognition at Scale}. In \bibinfo{booktitle}{\emph{International
  Conference on Learning Representations}}.
\newblock


\bibitem[Friedman et~al\mbox{.}(2015)]%
        {friedman2015fundamentals}
\bibfield{author}{\bibinfo{person}{Lawrence~M Friedman},
  \bibinfo{person}{Curt~D Furberg}, \bibinfo{person}{David~L DeMets},
  \bibinfo{person}{David~M Reboussin}, {and} \bibinfo{person}{Christopher~B
  Granger}.} \bibinfo{year}{2015}\natexlab{}.
\newblock \bibinfo{booktitle}{\emph{Fundamentals of clinical trials}}.
\newblock \bibinfo{publisher}{Springer}.
\newblock


\bibitem[Gorishniy et~al\mbox{.}(2021)]%
        {gorishniy2021revisiting}
\bibfield{author}{\bibinfo{person}{Yury Gorishniy}, \bibinfo{person}{Ivan
  Rubachev}, \bibinfo{person}{Valentin Khrulkov}, {and} \bibinfo{person}{Artem
  Babenko}.} \bibinfo{year}{2021}\natexlab{}.
\newblock \showarticletitle{Revisiting deep learning models for tabular data}.
\newblock \bibinfo{journal}{\emph{Advances in Neural Information Processing
  Systems}}  \bibinfo{volume}{34} (\bibinfo{year}{2021}),
  \bibinfo{pages}{18932--18943}.
\newblock


\bibitem[Haller et~al\mbox{.}(2013)]%
        {haller2013applying}
\bibfield{author}{\bibinfo{person}{Bernhard Haller}, \bibinfo{person}{Georg
  Schmidt}, {and} \bibinfo{person}{Kurt Ulm}.} \bibinfo{year}{2013}\natexlab{}.
\newblock \showarticletitle{Applying competing risks regression models: an
  overview}.
\newblock \bibinfo{journal}{\emph{Lifetime Data Analysis}}
  \bibinfo{volume}{19}, \bibinfo{number}{1} (\bibinfo{year}{2013}),
  \bibinfo{pages}{33--58}.
\newblock


\bibitem[Hu et~al\mbox{.}(2021)]%
        {hu2021transformer}
\bibfield{author}{\bibinfo{person}{Shi Hu}, \bibinfo{person}{Egill
  Fridgeirsson}, \bibinfo{person}{Guido van Wingen}, {and} \bibinfo{person}{Max
  Welling}.} \bibinfo{year}{2021}\natexlab{}.
\newblock \showarticletitle{Transformer-based deep survival analysis}. In
  \bibinfo{booktitle}{\emph{Survival Prediction-Algorithms, Challenges and
  Applications}}. PMLR, \bibinfo{pages}{132--148}.
\newblock


\bibitem[Ishwaran et~al\mbox{.}(2008)]%
        {ishwaran2008random}
\bibfield{author}{\bibinfo{person}{Hemant Ishwaran}, \bibinfo{person}{Udaya~B
  Kogalur}, \bibinfo{person}{Eugene~H Blackstone}, {and}
  \bibinfo{person}{Michael~S Lauer}.} \bibinfo{year}{2008}\natexlab{}.
\newblock \showarticletitle{Random survival forests}.
\newblock \bibinfo{journal}{\emph{The Annals of Applied Statistics}}
  \bibinfo{volume}{2}, \bibinfo{number}{3} (\bibinfo{year}{2008}),
  \bibinfo{pages}{841--860}.
\newblock


\bibitem[Katzman et~al\mbox{.}(2018)]%
        {katzman2018deep}
\bibfield{author}{\bibinfo{person}{Jared~L Katzman}, \bibinfo{person}{Uri
  Shaham}, \bibinfo{person}{Alexander Cloninger}, \bibinfo{person}{Jonathan
  Bates}, \bibinfo{person}{Tingting Jiang}, {and} \bibinfo{person}{Yuval
  Kluger}.} \bibinfo{year}{2018}\natexlab{}.
\newblock \showarticletitle{DeepSurv: Personalized treatment recommender system
  using a {C}ox proportional hazards deep neural network}.
\newblock \bibinfo{journal}{\emph{BMC Medical Research Methodology}}
  \bibinfo{volume}{18}, \bibinfo{number}{1} (\bibinfo{year}{2018}),
  \bibinfo{pages}{1--12}.
\newblock


\bibitem[Kingma and Ba(2014)]%
        {kingma2014adam}
\bibfield{author}{\bibinfo{person}{Diederik~P Kingma} {and}
  \bibinfo{person}{Jimmy Ba}.} \bibinfo{year}{2014}\natexlab{}.
\newblock \showarticletitle{Adam: A method for stochastic optimization}.
\newblock \bibinfo{journal}{\emph{arXiv preprint arXiv:1412.6980}}
  (\bibinfo{year}{2014}).
\newblock


\bibitem[Klambauer et~al\mbox{.}(2017)]%
        {klambauer2017self}
\bibfield{author}{\bibinfo{person}{G{\"u}nter Klambauer},
  \bibinfo{person}{Thomas Unterthiner}, \bibinfo{person}{Andreas Mayr}, {and}
  \bibinfo{person}{Sepp Hochreiter}.} \bibinfo{year}{2017}\natexlab{}.
\newblock \showarticletitle{Self-normalizing neural networks}. In
  \bibinfo{booktitle}{\emph{International Conference on Neural Information
  Processing Systems}}. \bibinfo{pages}{972--981}.
\newblock


\bibitem[Knaus et~al\mbox{.}(1995)]%
        {knaus1995support}
\bibfield{author}{\bibinfo{person}{William~A Knaus}, \bibinfo{person}{Frank~E
  Harrell}, \bibinfo{person}{Joanne Lynn}, \bibinfo{person}{Lee Goldman},
  \bibinfo{person}{Russell~S Phillips}, \bibinfo{person}{Alfred~F Connors},
  \bibinfo{person}{Neal~V Dawson}, \bibinfo{person}{William~J Fulkerson},
  \bibinfo{person}{Robert~M Califf}, \bibinfo{person}{Norman Desbiens},
  {et~al\mbox{.}}} \bibinfo{year}{1995}\natexlab{}.
\newblock \showarticletitle{The {SUPPORT} prognostic model: Objective estimates
  of survival for seriously ill hospitalized adults}.
\newblock \bibinfo{journal}{\emph{Annals of Internal Medicine}}
  \bibinfo{volume}{122}, \bibinfo{number}{3} (\bibinfo{year}{1995}),
  \bibinfo{pages}{191--203}.
\newblock


\bibitem[Koene et~al\mbox{.}(2016)]%
        {koene2016shared}
\bibfield{author}{\bibinfo{person}{Ryan~J Koene}, \bibinfo{person}{Anna~E
  Prizment}, \bibinfo{person}{Anne Blaes}, {and} \bibinfo{person}{Suma~H
  Konety}.} \bibinfo{year}{2016}\natexlab{}.
\newblock \showarticletitle{Shared risk factors in cardiovascular disease and
  cancer}.
\newblock \bibinfo{journal}{\emph{Circulation}} \bibinfo{volume}{133},
  \bibinfo{number}{11} (\bibinfo{year}{2016}), \bibinfo{pages}{1104--1114}.
\newblock


\bibitem[Kvamme and Borgan(2019)]%
        {kvamme2019continuous}
\bibfield{author}{\bibinfo{person}{H{\aa}vard Kvamme} {and}
  \bibinfo{person}{{\O}rnulf Borgan}.} \bibinfo{year}{2019}\natexlab{}.
\newblock \showarticletitle{Continuous and discrete-time survival prediction
  with neural networks}.
\newblock \bibinfo{journal}{\emph{arXiv preprint arXiv:1910.06724}}
  (\bibinfo{year}{2019}).
\newblock


\bibitem[Lee et~al\mbox{.}(2019)]%
        {lee2019dynamic}
\bibfield{author}{\bibinfo{person}{Changhee Lee}, \bibinfo{person}{Jinsung
  Yoon}, {and} \bibinfo{person}{Mihaela Schaar}.}
  \bibinfo{year}{2019}\natexlab{}.
\newblock \showarticletitle{Dynamic-Deephit: A deep learning approach for
  dynamic survival analysis with competing risks based on longitudinal data}.
\newblock \bibinfo{journal}{\emph{IEEE Transactions on Biomedical Engineering}}
  \bibinfo{volume}{67}, \bibinfo{number}{1} (\bibinfo{year}{2019}),
  \bibinfo{pages}{122--133}.
\newblock


\bibitem[Lee et~al\mbox{.}(2018)]%
        {lee2018deephit}
\bibfield{author}{\bibinfo{person}{Changhee Lee}, \bibinfo{person}{William~R
  Zame}, \bibinfo{person}{Jinsung Yoon}, {and} \bibinfo{person}{Mihaela
  Schaar}.} \bibinfo{year}{2018}\natexlab{}.
\newblock \showarticletitle{Deep{H}it: A deep learning approach to survival
  analysis with competing risks}. In \bibinfo{booktitle}{\emph{AAAI Conference
  on Artificial Intelligence}}.
\newblock


\bibitem[Li et~al\mbox{.}(2016b)]%
        {li2016multi}
\bibfield{author}{\bibinfo{person}{Yan Li}, \bibinfo{person}{Jie Wang},
  \bibinfo{person}{Jieping Ye}, {and} \bibinfo{person}{Chandan~K Reddy}.}
  \bibinfo{year}{2016}\natexlab{b}.
\newblock \showarticletitle{A multi-task learning formulation for survival
  analysis}. In \bibinfo{booktitle}{\emph{ACM SIGKDD International Conference
  on Knowledge Discovery and Data Mining}}. \bibinfo{pages}{1715--1724}.
\newblock


\bibitem[Li et~al\mbox{.}(2016a)]%
        {li2016transfer}
\bibfield{author}{\bibinfo{person}{Yan Li}, \bibinfo{person}{Lu Wang},
  \bibinfo{person}{Jie Wang}, \bibinfo{person}{Jieping Ye}, {and}
  \bibinfo{person}{Chandan~K Reddy}.} \bibinfo{year}{2016}\natexlab{a}.
\newblock \showarticletitle{Transfer learning for survival analysis via
  efficient l2, 1-norm regularized cox regression}. In
  \bibinfo{booktitle}{\emph{IEEE International Conference on Data Mining}}.
  IEEE, \bibinfo{pages}{231--240}.
\newblock


\bibitem[Little and Rubin(2019)]%
        {little2019statistical}
\bibfield{author}{\bibinfo{person}{Roderick~JA Little} {and}
  \bibinfo{person}{Donald~B Rubin}.} \bibinfo{year}{2019}\natexlab{}.
\newblock \bibinfo{booktitle}{\emph{Statistical analysis with missing data}}.
  Vol.~\bibinfo{volume}{793}.
\newblock \bibinfo{publisher}{John Wiley \& Sons}.
\newblock


\bibitem[Liu et~al\mbox{.}(2021b)]%
        {liu2021evaluating}
\bibfield{author}{\bibinfo{person}{Ruishan Liu}, \bibinfo{person}{Shemra
  Rizzo}, \bibinfo{person}{Samuel Whipple}, \bibinfo{person}{Navdeep Pal},
  \bibinfo{person}{Arturo~Lopez Pineda}, \bibinfo{person}{Michael Lu},
  \bibinfo{person}{Brandon Arnieri}, \bibinfo{person}{Ying Lu},
  \bibinfo{person}{William Capra}, \bibinfo{person}{Ryan Copping},
  {et~al\mbox{.}}} \bibinfo{year}{2021}\natexlab{b}.
\newblock \showarticletitle{Evaluating eligibility criteria of oncology trials
  using real-world data and AI}.
\newblock \bibinfo{journal}{\emph{Nature}} \bibinfo{volume}{592},
  \bibinfo{number}{7855} (\bibinfo{year}{2021}), \bibinfo{pages}{629--633}.
\newblock


\bibitem[Liu et~al\mbox{.}(2019)]%
        {liu2019roberta}
\bibfield{author}{\bibinfo{person}{Yinhan Liu}, \bibinfo{person}{Myle Ott},
  \bibinfo{person}{Naman Goyal}, \bibinfo{person}{Jingfei Du},
  \bibinfo{person}{Mandar Joshi}, \bibinfo{person}{Danqi Chen},
  \bibinfo{person}{Omer Levy}, \bibinfo{person}{Mike Lewis},
  \bibinfo{person}{Luke Zettlemoyer}, {and} \bibinfo{person}{Veselin
  Stoyanov}.} \bibinfo{year}{2019}\natexlab{}.
\newblock \showarticletitle{Roberta: A robustly optimized bert pretraining
  approach}.
\newblock \bibinfo{journal}{\emph{arXiv preprint arXiv:1907.11692}}
  (\bibinfo{year}{2019}).
\newblock


\bibitem[Liu et~al\mbox{.}(2021a)]%
        {liu2021swin}
\bibfield{author}{\bibinfo{person}{Ze Liu}, \bibinfo{person}{Yutong Lin},
  \bibinfo{person}{Yue Cao}, \bibinfo{person}{Han Hu}, \bibinfo{person}{Yixuan
  Wei}, \bibinfo{person}{Zheng Zhang}, \bibinfo{person}{Stephen Lin}, {and}
  \bibinfo{person}{Baining Guo}.} \bibinfo{year}{2021}\natexlab{a}.
\newblock \showarticletitle{Swin transformer: Hierarchical vision transformer
  using shifted windows}. In \bibinfo{booktitle}{\emph{Proceedings of the
  IEEE/CVF International Conference on Computer Vision}}.
  \bibinfo{pages}{10012--10022}.
\newblock


\bibitem[Luck et~al\mbox{.}(2017)]%
        {luck2017deep}
\bibfield{author}{\bibinfo{person}{Margaux Luck}, \bibinfo{person}{Tristan
  Sylvain}, \bibinfo{person}{H{\'e}lo{\"\i}se Cardinal},
  \bibinfo{person}{Andrea Lodi}, {and} \bibinfo{person}{Yoshua Bengio}.}
  \bibinfo{year}{2017}\natexlab{}.
\newblock \showarticletitle{Deep learning for patient-specific kidney graft
  survival analysis}.
\newblock \bibinfo{journal}{\emph{arXiv preprint arXiv:1705.10245}}
  (\bibinfo{year}{2017}).
\newblock


\bibitem[Morrison(2004)]%
        {morrison2004introduction}
\bibfield{author}{\bibinfo{person}{Jeff Morrison}.}
  \bibinfo{year}{2004}\natexlab{}.
\newblock \showarticletitle{Introduction to survival analysis in business}.
\newblock \bibinfo{journal}{\emph{Journal of Business Forecasting Methods and
  Systems}} \bibinfo{volume}{23}, \bibinfo{number}{1} (\bibinfo{year}{2004}),
  \bibinfo{pages}{18--22}.
\newblock


\bibitem[Nagpal et~al\mbox{.}(2021)]%
        {nagpal2021deep}
\bibfield{author}{\bibinfo{person}{Chirag Nagpal},
  \bibinfo{person}{Xinyu~Rachel Li}, {and} \bibinfo{person}{Artur Dubrawski}.}
  \bibinfo{year}{2021}\natexlab{}.
\newblock \showarticletitle{Deep survival machines: Fully parametric survival
  regression and representation learning for censored data with competing
  risks}.
\newblock \bibinfo{journal}{\emph{IEEE Journal of Biomedical and Health
  Informatics}} (\bibinfo{year}{2021}).
\newblock


\bibitem[Pe{\~n}a and Hollander(2004)]%
        {pena2004models}
\bibfield{author}{\bibinfo{person}{Edsel~A Pe{\~n}a} {and}
  \bibinfo{person}{Myles Hollander}.} \bibinfo{year}{2004}\natexlab{}.
\newblock \showarticletitle{Models for recurrent events in reliability and
  survival analysis}.
\newblock In \bibinfo{booktitle}{\emph{Mathematical reliability: An expository
  perspective}}. \bibinfo{publisher}{Springer}, \bibinfo{pages}{105--123}.
\newblock


\bibitem[P{\"o}lsterl et~al\mbox{.}(2015)]%
        {polsterl2015fast}
\bibfield{author}{\bibinfo{person}{Sebastian P{\"o}lsterl},
  \bibinfo{person}{Nassir Navab}, {and} \bibinfo{person}{Amin Katouzian}.}
  \bibinfo{year}{2015}\natexlab{}.
\newblock \showarticletitle{Fast training of support vector machines for
  survival analysis}. In \bibinfo{booktitle}{\emph{Joint European Conference on
  Machine Learning and Knowledge Discovery in Databases}}. Springer,
  \bibinfo{pages}{243--259}.
\newblock


\bibitem[Rahman et~al\mbox{.}(2021)]%
        {rahman2021deeppseudo}
\bibfield{author}{\bibinfo{person}{Md~Mahmudur Rahman}, \bibinfo{person}{Koji
  Matsuo}, \bibinfo{person}{Shinya Matsuzaki}, {and} \bibinfo{person}{Sanjay
  Purushotham}.} \bibinfo{year}{2021}\natexlab{}.
\newblock \showarticletitle{{DeepPseudo}: Pseudo value based deep learning
  models for competing risk analysis}. In \bibinfo{booktitle}{\emph{AAAI
  Conference on Artificial Intelligence}}, Vol.~\bibinfo{volume}{35}.
  \bibinfo{pages}{479--487}.
\newblock


\bibitem[Ranganath et~al\mbox{.}(2016)]%
        {ranganath2016deep}
\bibfield{author}{\bibinfo{person}{Rajesh Ranganath}, \bibinfo{person}{Adler
  Perotte}, \bibinfo{person}{No{\'e}mie Elhadad}, {and} \bibinfo{person}{David
  Blei}.} \bibinfo{year}{2016}\natexlab{}.
\newblock \showarticletitle{Deep survival analysis}. In
  \bibinfo{booktitle}{\emph{Machine Learning for Healthcare Conference}}. PMLR,
  \bibinfo{pages}{101--114}.
\newblock


\bibitem[Ren et~al\mbox{.}(2019)]%
        {ren2019deep}
\bibfield{author}{\bibinfo{person}{Kan Ren}, \bibinfo{person}{Jiarui Qin},
  \bibinfo{person}{Lei Zheng}, \bibinfo{person}{Zhengyu Yang},
  \bibinfo{person}{Weinan Zhang}, \bibinfo{person}{Lin Qiu}, {and}
  \bibinfo{person}{Yong Yu}.} \bibinfo{year}{2019}\natexlab{}.
\newblock \showarticletitle{Deep recurrent survival analysis}. In
  \bibinfo{booktitle}{\emph{AAAI Conference on Artificial Intelligence}},
  Vol.~\bibinfo{volume}{33}. \bibinfo{pages}{4798--4805}.
\newblock


\bibitem[Schnabel et~al\mbox{.}(2016)]%
        {schnabel2016recommendations}
\bibfield{author}{\bibinfo{person}{Tobias Schnabel}, \bibinfo{person}{Adith
  Swaminathan}, \bibinfo{person}{Ashudeep Singh}, \bibinfo{person}{Navin
  Chandak}, {and} \bibinfo{person}{Thorsten Joachims}.}
  \bibinfo{year}{2016}\natexlab{}.
\newblock \showarticletitle{Recommendations as treatments: Debiasing learning
  and evaluation}. In \bibinfo{booktitle}{\emph{International Conference on
  Machine Learning}}. PMLR, \bibinfo{pages}{1670--1679}.
\newblock


\bibitem[Tjandra et~al\mbox{.}(2021)]%
        {tjandra2021hierarchical}
\bibfield{author}{\bibinfo{person}{Donna Tjandra}, \bibinfo{person}{Yifei He},
  {and} \bibinfo{person}{Jenna Wiens}.} \bibinfo{year}{2021}\natexlab{}.
\newblock \showarticletitle{A hierarchical approach to multi-event survival
  analysis}. In \bibinfo{booktitle}{\emph{AAAI Conference on Artificial
  Intelligence}}, Vol.~\bibinfo{volume}{35}. \bibinfo{pages}{591--599}.
\newblock


\bibitem[Uno et~al\mbox{.}(2011)]%
        {uno2011c}
\bibfield{author}{\bibinfo{person}{Hajime Uno}, \bibinfo{person}{Tianxi Cai},
  \bibinfo{person}{Michael~J Pencina}, \bibinfo{person}{Ralph~B D'Agostino},
  {and} \bibinfo{person}{Lee-Jen Wei}.} \bibinfo{year}{2011}\natexlab{}.
\newblock \showarticletitle{On the C-statistics for evaluating overall adequacy
  of risk prediction procedures with censored survival data}.
\newblock \bibinfo{journal}{\emph{Statistics in Medicine}}
  \bibinfo{volume}{30}, \bibinfo{number}{10} (\bibinfo{year}{2011}),
  \bibinfo{pages}{1105--1117}.
\newblock


\bibitem[Van~Belle et~al\mbox{.}(2007)]%
        {van2007support}
\bibfield{author}{\bibinfo{person}{Vanya Van~Belle}, \bibinfo{person}{Kristiaan
  Pelckmans}, \bibinfo{person}{JAK Suykens}, {and} \bibinfo{person}{Sabine
  Van~Huffel}.} \bibinfo{year}{2007}\natexlab{}.
\newblock \showarticletitle{Support vector machines for survival analysis}. In
  \bibinfo{booktitle}{\emph{International Conference on Computational
  Intelligence in Medicine and Healthcare}}. \bibinfo{pages}{1--8}.
\newblock


\bibitem[Vaswani et~al\mbox{.}(2017)]%
        {vaswani2017attention}
\bibfield{author}{\bibinfo{person}{Ashish Vaswani}, \bibinfo{person}{Noam
  Shazeer}, \bibinfo{person}{Niki Parmar}, \bibinfo{person}{Jakob Uszkoreit},
  \bibinfo{person}{Llion Jones}, \bibinfo{person}{Aidan~N Gomez},
  \bibinfo{person}{{\L}ukasz Kaiser}, {and} \bibinfo{person}{Illia
  Polosukhin}.} \bibinfo{year}{2017}\natexlab{}.
\newblock \showarticletitle{Attention is all you need}. In
  \bibinfo{booktitle}{\emph{Advances in Neural Information Processing
  Systems}}. \bibinfo{pages}{5998--6008}.
\newblock


\bibitem[Vinzamuri et~al\mbox{.}(2014)]%
        {vinzamuri2014active}
\bibfield{author}{\bibinfo{person}{Bhanukiran Vinzamuri}, \bibinfo{person}{Yan
  Li}, {and} \bibinfo{person}{Chandan~K Reddy}.}
  \bibinfo{year}{2014}\natexlab{}.
\newblock \showarticletitle{Active learning based survival regression for
  censored data}. In \bibinfo{booktitle}{\emph{ACM International Conference on
  Information and Knowledge Management}}. \bibinfo{pages}{241--250}.
\newblock


\bibitem[Wang et~al\mbox{.}(2019)]%
        {wang2019machine}
\bibfield{author}{\bibinfo{person}{Ping Wang}, \bibinfo{person}{Yan Li}, {and}
  \bibinfo{person}{Chandan~K Reddy}.} \bibinfo{year}{2019}\natexlab{}.
\newblock \showarticletitle{Machine learning for survival analysis: A survey}.
\newblock \bibinfo{journal}{\emph{Comput. Surveys}} \bibinfo{volume}{51},
  \bibinfo{number}{6} (\bibinfo{year}{2019}), \bibinfo{pages}{1--36}.
\newblock


\bibitem[Wang et~al\mbox{.}(2020b)]%
        {wang2020boxhed}
\bibfield{author}{\bibinfo{person}{Xiaochen Wang}, \bibinfo{person}{Arash
  Pakbin}, \bibinfo{person}{Bobak Mortazavi}, \bibinfo{person}{Hongyu Zhao},
  {and} \bibinfo{person}{Donald Lee}.} \bibinfo{year}{2020}\natexlab{b}.
\newblock \showarticletitle{{BoXHED}: Boosted e{X}act hazard estimator with
  dynamic covariates}. In \bibinfo{booktitle}{\emph{International Conference on
  Machine Learning}}. PMLR, \bibinfo{pages}{9973--9982}.
\newblock


\bibitem[Wang et~al\mbox{.}(2020a)]%
        {wang2020information}
\bibfield{author}{\bibinfo{person}{Zifeng Wang}, \bibinfo{person}{Xi Chen},
  \bibinfo{person}{Rui Wen}, \bibinfo{person}{Shao-Lun Huang},
  \bibinfo{person}{Ercan Kuruoglu}, {and} \bibinfo{person}{Yefeng Zheng}.}
  \bibinfo{year}{2020}\natexlab{a}.
\newblock \showarticletitle{Information theoretic counterfactual learning from
  missing-not-at-random feedback}.
\newblock \bibinfo{journal}{\emph{Advances in Neural Information Processing
  Systems}}  \bibinfo{volume}{33} (\bibinfo{year}{2020}),
  \bibinfo{pages}{1854--1864}.
\newblock


\bibitem[Wang and Sun(2022a)]%
        {wang2022transtab}
\bibfield{author}{\bibinfo{person}{Zifeng Wang} {and} \bibinfo{person}{Jimeng
  Sun}.} \bibinfo{year}{2022}\natexlab{a}.
\newblock \showarticletitle{TransTab: Learning Transferable Tabular
  Transformers Across Tables}.
\newblock \bibinfo{journal}{\emph{arXiv preprint arXiv:2205.09328}}
  (\bibinfo{year}{2022}).
\newblock


\bibitem[Wang and Sun(2022b)]%
        {wang2022trial2vec}
\bibfield{author}{\bibinfo{person}{Zifeng Wang} {and} \bibinfo{person}{Jimeng
  Sun}.} \bibinfo{year}{2022}\natexlab{b}.
\newblock \showarticletitle{Trial2Vec: Zero-Shot Clinical Trial Document
  Similarity Search using Self-Supervision}.
\newblock \bibinfo{journal}{\emph{arXiv preprint}} (\bibinfo{year}{2022}).
\newblock


\bibitem[Zhao et~al\mbox{.}(2021)]%
        {zhao2021bertsurv}
\bibfield{author}{\bibinfo{person}{Yun Zhao}, \bibinfo{person}{Qinghang Hong},
  \bibinfo{person}{Xinlu Zhang}, \bibinfo{person}{Yu Deng},
  \bibinfo{person}{Yuqing Wang}, {and} \bibinfo{person}{Linda Petzold}.}
  \bibinfo{year}{2021}\natexlab{}.
\newblock \showarticletitle{Bertsurv: Bert-based survival models for predicting
  outcomes of trauma patients}.
\newblock \bibinfo{journal}{\emph{arXiv preprint arXiv:2103.10928}}
  (\bibinfo{year}{2021}).
\newblock


\end{thebibliography}

\clearpage

\appendix

\setcounter{equation}{0}

\section{Unbiased Estimate using IPS-based Loss}\label{sec:appx_lh_ips}
\begin{proposition}[IPS Loss is Unbiased Estimate of True Loss]
The proposed inverse propensity score based logistic hazard loss $L_{\text{IPS}}$ is the unbiased estimate of true risk $L^*$, as
\begin{equation}
    L^* \triangleq \frac1{nK_E} \sum_{i,k} \ell_{ik}.
\end{equation}
\end{proposition}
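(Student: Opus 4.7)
The plan is to establish $\mathbb{E}[L_{\text{IPS}}] = L^*$ by computing the expectation over the randomness of the observed event indicators $\mathbbm{1}_{ik}$ given the covariates $\bx_i$, mirroring the standard IPS argument from causal inference and counterfactual learning. The entire calculation rests on the defining identity of the propensity score, $\mathbb{E}[\mathbbm{1}_{ik} \mid \bx_i] = \pr(e_i = k \mid \bx_i) = \pi_{ik}$.

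Concretely, under the assumption that the cause-specific loss terms $\ell_{ik}$ are (conditional on $\bx_i$) fixed counterfactual quantities that do not depend on the realized $e_i$, linearity of expectation yields $\mathbb{E}[\mathbbm{1}_{ik} \ell_{ik} / \pi_{ik} \mid \bx_i] = \ell_{ik} \cdot \mathbb{E}[\mathbbm{1}_{ik} \mid \bx_i] / \pi_{ik} = \ell_{ik}$. Summing over $i$ and $k$, dividing by $nK_E$, and taking outer expectation over $\bx_i$ then reproduces $L^*$ exactly. So the body of the proof is essentially a one-liner once the conditioning is set up.

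The main obstacle is less the algebra than the careful statement of assumptions, and I would devote most of the writeup to flagging them. Three conditions are implicitly needed: (i) \emph{positivity}, $\pi_{ik} > 0$ for every $(i,k)$, so the reciprocal is well-defined and rare events do not have vanishing propensity; (ii) \emph{ignorability}, $\{\ell_{i1}, \dots, \ell_{iK_E}\} \ind e_i \mid \bx_i$, which is precisely the exchangeability condition invoked in the introduction and what permits treating $\ell_{ik}$ as deterministic inside the conditional expectation; and (iii) that the propensity model $\phi$ recovers the true $\pr(e_i = k \mid \bx_i)$. I would flag (iii) explicitly because in practice $\pi_{ik}$ is the logistic-regression estimate of Eq.~(21) rather than the ground-truth propensity, so the unbiasedness guarantee is strictly conditional on this identification; under misspecification the argument only yields approximate debiasing, a caveat worth stating immediately after the identity $\mathbb{E}[L_{\text{IPS}}] = L^*$ is derived.
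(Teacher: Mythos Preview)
Your proposal is correct and follows essentially the same route as the paper: take the expectation of $L_{\text{IPS}}$ over the event indicator, use $\mathbb{E}[\mathbbm{1}_{ik}] = \pi_{ik}$ to cancel the denominator, and recover $L^*$ by linearity. The paper additionally contrasts this with the biased naive loss and is less explicit about conditioning on $\bx_i$, but your more careful articulation of positivity, ignorability, and propensity-model correctness only strengthens the argument rather than departing from it.
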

\begin{proof}
Let's first see why the naive LH loss is biased with selection bias censoring. The naive LH loss is defined by
\begin{equation}
    L_{\text{naive}} = \frac1{\sum_{i,k} \mathbbm{1}_{ik}} \sum_{i,k}  \mathbbm{1}_{ik} \ell_{ik}.
\end{equation}
If we take the expectation of it on the event indicator $e$, we shall get
\begin{equation}
\begin{aligned}
        \mathbb{E}_e[L_{\text{naive}}] & = \frac1{\sum_{i,k} \mathbbm{1}_{ik}} \sum_{i,k}   \mathbb{E}_e[\mathbbm{1}_{ik}] \ell_{ik} \\
        & =  \frac1{\sum_{i,k} \mathbbm{1}_{ik}} \sum_{i,k}   \pr(e_i=k) \ell_{ik}\\  & \neq  L^*.
\end{aligned}
\end{equation}
This is because the censoring of an event is not at random, and the probability of event occurrences depends on the patient's characteristics, which are so-called \emph{confounders} under the context of counterfactual learning. Therefore, we build a new estimator $L_{\text{IPS}}$ as
\begin{align}
\mathbb{E}_e[L_{\text{IPS}}] & = \frac1{nK_E} \sum_{i,k} \frac{\mathbb{E}_e[\mathbbm{1}_{ik}] \ell_{ik}}{\pi_{ik}} \\
& = \frac1{nK_E} \sum_{i,k} \frac{\pr(e_i=k) \ell_{ik}}{\pi_{ik}} \\
& = \frac1{nK_E} \sum_{i,k} \ell_{ik} \\
& = L^*.
\end{align}
In detail, we require an IPS estimator to obtain $\pi_{ik}$ due to the \emph{observational} setting: the patients are part of the assignment mechanism that generates the observational matrix. In other words, the appearance of events is covariate-dependent. Hence, we ought to estimate the propensity score $\pi$ from the observational matrix ourselves, as done by Eq. \eqref{eq:IPS_estimator}.
\end{proof}

\end{document}